\documentclass{article}
\PassOptionsToPackage{numbers}{natbib}

\usepackage[preprint]{neurips_2026}

\usepackage[utf8]{inputenc}
\usepackage[T1]{fontenc}
\usepackage{hyperref}
\usepackage{url}
\usepackage{booktabs}
\usepackage{amsmath}
\usepackage{amsfonts}
\usepackage{amssymb}
\usepackage{amsthm}
\usepackage{mathtools}
\usepackage{nicefrac}
\usepackage{microtype}
\usepackage{bm}
\usepackage{xcolor}
\usepackage{algorithm}
\usepackage{algpseudocode}
\usepackage{graphicx}
\usepackage{multirow}
\usepackage{tikz}
\usepackage{longtable}
\usepackage{enumitem}

\usetikzlibrary{positioning,calc,arrows.meta,shapes.geometric}
\newcommand{\eg}{e.g.,\ }

\newcommand{\R}{\mathbb{R}}
\newcommand{\E}{\mathbb{E}}

\newcommand{\MI}{I}
\newcommand{\sgn}{\mathrm{sign}}
\newcommand{\unif}{\mathcal{U}}
\newcommand{\method}{\textsc{PACZero}}
\newcommand{\PACMI}{\textsc{PACZero-MI}}
\newcommand{\PACZPL}{\textsc{PACZero-ZPL}}

\newtheorem{theorem}{Theorem}
\newtheorem{lemma}[theorem]{Lemma}

\newtheorem{corollary}[theorem]{Corollary}
\theoremstyle{definition}
\newtheorem{definition}[theorem]{Definition}
\newtheorem{remark}[theorem]{Remark}

\title{PACZero: PAC-Private Fine-Tuning of Language Models via Sign Quantization}

\author{%
  Murat Bilgehan Ertan\thanks{Affiliated with Vrije Universiteit Amsterdam.} \\
  CWI Amsterdam \\
  Amsterdam, Netherlands \\
  \texttt{bilgehan@cwi.nl} \\
  \And
  Xiaochen Zhu \\
  MIT \\
  Cambridge, MA, USA \\
  \texttt{xczhu@mit.edu} \\
  \AND
  Phuong Ha Nguyen \\
  eBay \\
  San Jose, CA, USA \\
  \texttt{phuongha.ntu@gmail.com} \\
  \AND
  Marten van Dijk\footnotemark[\value{footnote}]  \\
  CWI Amsterdam \\
  Amsterdam, Netherlands \\
  \texttt{mevd@cwi.nl} \\
  \And
  Srinivas Devadas \\
  MIT \\
  Cambridge, MA, USA \\
  \texttt{devadas@mit.edu} \\
}

\begin{document}

\maketitle

\begin{abstract}
We introduce \method{}, a family of PAC-private zeroth-order mechanisms for fine-tuning large language models that delivers usable utility at $\MI(S^*; Y_{1:T})\!=\!0$. This privacy regime bounds the membership-inference attack (MIA) posterior success rate at the prior, an MIA-resistance level the DP framework matches only at $\varepsilon\!=\!0$ and infinite noise. All DP-ZO comparisons below are matched at the MIA posterior level. The key insight is that PAC Privacy charges mutual information only when the release depends on which candidate subset is the secret. Sign-quantizing subset-aggregated zeroth-order gradients creates frequent \emph{unanimity}, steps at which every candidate subset agrees on the update direction; at these steps the released sign costs zero conditional mutual information. We propose two variants that span the privacy-utility trade-off: \PACMI{} (budgeted MI via exact calibration on the binary release) and \PACZPL{} ($\MI\!=\!0$ via a uniform coin flip on disagreement steps). We evaluate on SST-2 and SQuAD with OPT-1.3B and OPT-6.7B in both LoRA and full-parameter tracks. On SST-2 OPT-1.3B full fine-tuning at $\MI\!=\!0$, \PACZPL{} reaches $\bm{88.99\!\pm\!0.91\%}$, within $2.1$pp of the non-private MeZO baseline ($91.1$ FT). No prior method produces usable utility in the high-privacy regime $\varepsilon\!<\!1$, and \PACZPL{} obtains competitive SST-2 accuracy and nontrivial SQuAD F1 across OPT-1.3B and OPT-6.7B at $\MI\!=\!0$.
\end{abstract}

\section{Introduction}
Zeroth-order (ZO) optimization has emerged as a memory-tractable alternative to first-order LLM fine-tuning, replacing backpropagation with two forward passes per step~\citep{malladi2023mezo}. When the training corpus is sensitive, trained weights are known to leak individual records through membership inference attacks (MIA) and verbatim extraction~\citep{DBLP:conf/sp/ShokriSSS17,DBLP:conf/sp/CarliniCN0TT22,DBLP:conf/uss/CarliniTWJHLRBS21,nasr2025extracting,DBLP:conf/emnlp/MireshghallahGU22}. The dominant defense is differential privacy (DP), a quantitative guarantee that the influence of any single training record on the released model is provably bounded, parameterized by a parameter $\varepsilon$. The workhorse instantiation, DP-SGD~\citep{DBLP:conf/ccs/AbadiCGMMT016}, clips per-record gradients to a fixed sensitivity budget and adds Gaussian noise tracked across steps via the Rényi accountant~\citep{DBLP:conf/csfw/Mironov17,DBLP:conf/icml/KairouzOV15}. This methodology extends to LLM fine-tuning~\citep{DBLP:conf/iclr/YuNBGI0KLMWYZ22,DBLP:journals/tmis/LiuZZGZWQ25} with mature production tooling now in use~\citep{opacus,jax-privacy2022github,DBLP:journals/jair/PonomarevaHKXDMVCT23}. Plugging it into ZO yields DP-ZO~\citep{zhang2024dpzero,tang2024dpzo,bao2025dpaggzo}, which clips and noises the per-step zeroth-order scalar instead of the full gradient. Because the scalar is one-dimensional and the perturbation direction is data-independent, DP-ZO removes the explicit dimension dependence from the per-step noise. It does not, however, remove the impact on utility. Every published DP-ZO method inherits DP's harsh utility-privacy trade-off: usable accuracy requires a loose privacy budget, and utility collapses as $\varepsilon$ tightens, because the underlying mechanism (worst-case per-record sensitivity composed sequentially over $T$ steps) is unchanged. This trade-off is a long-observed property of DP-SGD in iterative deep-learning training, where tight $\varepsilon$ produces steep utility loss and disparate-impact effects~\citep{DBLP:conf/nips/BagdasaryanPS19,DBLP:journals/corr/abs-2012-07828,DBLP:journals/corr/abs-2601-10237,de2022unlockinghighaccuracydifferentiallyprivate,DBLP:journals/corr/abs-2502-17772,DBLP:journals/corr/abs-2105-07985,DBLP:journals/corr/abs-2111-13895}; reaching a meaningful high-privacy regime in LLM fine-tuning therefore requires a fundamentally different privacy accounting mechanism.

PAC Privacy~\citep{xiao2023pac} replaces DP's worst-case sensitivity with a noise calibration based on the empirical covariance of the release, and bounds the mutual information (MI) between the secret and the release. The key advantage of PAC Privacy is that it exploits the inherent \emph{stability} of the output and calibrates minimal noise for stable outputs. Recent work extends the framework to adaptive composition over inference-time \emph{response} streams~\citep{zhu2025pacresponses}. What remains open is the case relevant to private \emph{iterative} fine-tuning, the same setting in which DP-SGD operates, where a per-step update is released to the adversary at every optimization step. At LLM scale ($d\!\sim\!10^9$, $T\!\sim\!10^3$), however, optimal instance-based noise calibration derived from the output covariance is prohibitively expensive to achieve at this dimensionality, leaving iterative training out of reach of canonical PAC algorithms.

The proposed \method{} family closes both gaps at once, the high-privacy collapse of DP-SGD and the open problem of iterative-training PAC, by combining a zeroth-order substrate~\citep{malladi2023mezo} with a per-step release that is sign-quantized. Concretely, \method{} aggregates per-sample two-point zeroth-order loss differences over $M\!=\!128$ random subsets, sign-quantizes each subset mean to $s_m\!\in\!\{-1,{+}1\}$, and releases one bit identifying the sign of the secret subset (Figure~\ref{fig:mechanism}). When all $M=128$ bits agree, this is unanimous and does not spend privacy budget under PAC privacy.
The mechanism then instantiates as one of two variants depending on how the disagreement branch handles the released bit. The per-step release is now a Bernoulli with entropy $\le\log 2$ regardless of $d$, so the secret-to-release mutual information becomes a one-dimensional Gaussian integral that we evaluate exactly to calibrate the noise, replacing the variance-based Gaussian upper bound used in canonical PAC algorithms~\citep{xiao2023pac,sridhar2025pac,zhu2025pacresponses}. Two structural consequences follow from the quantized release. First, posterior-weighted unanimity becomes \emph{combinatorially detectable}, when the posterior over the secret index puts all mass on subsets that agree on the sign, the released bit is a deterministic function of the public history and contributes zero conditional MI; this branch fires on ${\sim}34$--$45\%$ of training steps in our SST-2 cells. The base variant \PACMI{} calibrates the disagreement-branch Gaussian to a per-step MI budget $\beta_t$. Second, the privacy budget can be driven \emph{exactly} to zero by replacing the disagreement-branch Gaussian release with a uniform $\unif(\{-1,{+}1\})$ coin flip, and the resulting \PACZPL{} variant satisfies $\MI(S^*; Y_{1:T})\!=\!0$ for every $T$, an MIA-resistance level matched in the DP framework only at $\varepsilon\!=\!0$ (infinite per-step noise; see \S\ref{sec:background}). Contributions are as follows:

\begin{itemize}
\item \textbf{The \method{} family of PAC-private ZO mechanisms.} We introduce \method{}\footnote{Code: \url{https://github.com/bilgehanertan/paczero/}}, a family of mechanisms unified by sign quantization of subset-aggregated zeroth-order updates, with two concrete variants (\PACMI{}, \PACZPL{}) that differ in how the disagreement branch is handled and span two points in the privacy-utility design space (\S\ref{sec:method}). We evaluate both on SST-2 and SQuAD across OPT-1.3B and OPT-6.7B in both LoRA and full-parameter tracks (Table~\ref{tab:headline}).
\item \textbf{Exact binary-input Gaussian MI calibration} We replace the variance-based Gaussian upper bound used by canonical PAC algorithms with the exact mutual information of the noised binary release, removing a calibration gap that grows unboundedly as the noise scale tightens (\S\ref{sec:binary-mi}). \PACMI{} accuracy is flat across four decades of MI budget on SST-2 OPT-1.3B FT, reaching $89.51\!\pm\!1.12\%$ at MI$\,=\,0.33$~nats (matched-MIA DP $\varepsilon\!\approx\!2$), $+2.9$pp above the matched DPZero K${=}1$ FT baseline~\citep{zhang2024dpzero} (Table~\ref{tab:plateau}).
\item \textbf{\boldmath An MIA-resistance level no DP-ZO can match.} The \PACZPL{} variant guarantees $\MI(S^*; Y_{1:T})\!=\!0$ for every $T$ (MIA posterior at the prior), the first such guarantee for iterative LLM fine-tuning to our knowledge; matching it in DP requires $\varepsilon\!=\!0$, to which every DP-SGD accountant assigns infinite noise (\S\ref{sec:background}, \S\ref{sec:zpl}).
\end{itemize}

\textbf{Scope of the $\MI\!=\!0$ guarantee.\;\;}
\PACZPL{} protects the identity of the secret subset
$S^*\!\in\!\{S_1,\ldots,S_M\}$ sampled from
$\mathcal{D}\!=\!\mathrm{Unif}\{S_1,\ldots,S_M\}$ over a public candidate
universe $U$ (\S\ref{sec:background}); under $\mathcal{D}$ the prior MIA
success rate per record is $1/2$, and $\MI(S^*;Y_{1:T})\!=\!0$ collapses
the posterior to this prior. It is \emph{not} a DP guarantee. DP at
$\varepsilon\!=\!0$ requires identical output distributions across all
neighbors (and forces infinite per-step noise), which \PACZPL{} does not
satisfy. The matched-MIA-prior DP-$\varepsilon$ annotations in our tables
(\eg ``DP $\varepsilon\!=\!0$'') denote the DP value that yields the same
MIA upper bound at prior $1/2$, not a claim that \method{} satisfies
$(\varepsilon,\delta)$-DP (\S\ref{sec:discussion}).


\begin{figure}
\begin{tikzpicture}[font=\footnotesize, node distance=4mm, auto]
  \node[draw, rounded corners=1pt, fill=gray!8, inner sep=2pt] (zo)
       {\parbox{2.4cm}{\centering Per-sample ZO\\$\hat g_i,\ i\!=\!1\!:\!N$}};
  \node[draw, rounded corners=1pt, fill=gray!8, inner sep=2pt, right=10mm of zo] (subs)
       {\parbox{2.6cm}{\centering $M\!=\!128$ subset means $\bar g_m$\\$s_m\!=\!\sgn(\bar g_m)$}};
  \node[draw, rounded corners=1pt, fill=blue!8, inner sep=2pt, right=10mm of subs] (q)
       {\parbox{3.5cm}{\centering Posterior agreement\\$q_t^+\!=\!\!\sum_m p_t[m]\,\mathbf 1[s_m\!=\!{+}1]$}};
  \node[draw, rounded corners=1pt, fill=green!12, inner sep=2pt, below right=8mm and -15mm of q] (una)
       {\parbox{3.7cm}{\centering \textbf{Unanimity} $q_t^+\!\in\!\{0,1\}$\\
                     $Y_t\!\gets\!s_{S^*}$,\quad$\beta_t^{\mathrm{used}}\!=\!0$\\
                     $p_{t+1}\!\gets\!p_t$}};
  \node[draw, rounded corners=1pt, fill=orange!12, inner sep=2pt, below left=8mm and -15mm of q] (gauss)
       {\parbox{4.7cm}{\centering \textbf{Disagreement branch}\\
                     $\sigma_t\!=\!\mathrm{BinaryMI}^{-1}(q_t^+,\beta_t)$\\
                     $\tilde Y_t\!=\!s_{S^*}\!+\!\mathcal N(0,\sigma_t^2),\ Y_t\!=\!\sgn(\tilde Y_t)$\\
                     $p_{t+1}[m]\!\propto\!p_t[m]\,\mathcal N(\tilde Y_t;\,s_m,\sigma_t^2)$}};
  \node[below=13mm of $(una)!0.5!(gauss)$, draw, rounded corners=1pt, fill=gray!8, inner sep=2pt] (upd)
       {\parbox{5.5cm}{\centering Update: $\theta_{t+1}\!\gets\!(1\!-\!\eta_t\lambda)\theta_t\!-\!\eta_t Y_t z_t$}};

  \draw[->] (zo) -- (subs);
  \draw[->] (subs) -- (q);
  \draw[->] (q.south) -- node[midway, above, sloped, font=\scriptsize]{split}  (gauss);
  \draw[->] (q.south) -- node[midway, above, sloped, font=\scriptsize]{agree}  (una);
  \draw[->] (gauss) -- (upd);
  \draw[->] (una) -- (upd);
\end{tikzpicture}
\caption{\textbf{The \method{} per-step mechanism.} Per-sample ZO scalars are aggregated over $M\!=\!128$ random subsets, sign-quantized to $s_m\!\in\!\{-1,{+}1\}$, and released as a single bit identifying the sign of the secret subset. On unanimity ($q_t^+\!\in\!\{0,1\}$) the released bit is constant on $\mathrm{supp}\,p_t$ and contributes zero conditional MI. On disagreement, \PACMI{} releases $\sgn(s_{j^*}\!+\!\mathcal{N}(0,\sigma_t^2))$ with $\sigma_t$ calibrated to a per-step MI budget $\beta_t$ (\S\ref{sec:binary-mi}); \PACZPL{} replaces this with a uniform $\unif(\{-1,{+}1\})$ coin flip, consuming zero MI and yielding $\MI(S^*; Y_{1:T})\!=\!0$ for every $T$.}
\label{fig:mechanism}
\end{figure}

\section{Background}
\label{sec:background}
\textbf{Two-point zeroth-order gradient estimation.\;\;}
For a model with parameters $\theta\in\R^d$ and a per-sample loss $\ell_i(\theta)$,
the two-point zeroth-order estimator~\citep{nesterov2017random,malladi2023mezo}
samples a single direction $z\sim\mathcal{N}(0,I_d)$ and computes the
per-sample finite-difference scalar
\begin{equation}
  \hat g_i \;=\; \frac{\ell_i(\theta + \mu z) - \ell_i(\theta - \mu z)}{2\mu}
  \;=\; \langle \nabla \ell_i(\theta), z\rangle + O(\mu^2),
\label{eq:zo-scalar}
\end{equation}
where $\mu>0$ is the perturbation scale~\citep{malladi2023mezo}. The parameter
update is $\theta \leftarrow \theta - \eta \cdot \hat g \cdot z$, where
$\hat g$ is obtained by clipping and averaging the $\hat g_i$ over the
batch~\citep{zhang2024dpzero,bao2025dpaggzo}. The crucial structural property,
exploited by all DP-ZO methods~\citep{zhang2024dpzero,tang2024dpzo,bao2025dpaggzo},
is that $z$ is sampled \emph{independently} of the data and is therefore public.
The only data-dependent quantity per step is the scalar $\hat g$. Privatizing
the update therefore reduces to privatizing a scalar (not a $d$-vector), making
the per-step noise scale dimension-independent.

%
\begin{definition}[PAC Privacy~\citep{xiao2023pac}]\label{def:pac}
Given a data generating distribution $\mathcal D$ over possible datasets in $\mathcal X$, and a binary-valued attack success criterion $\rho: \mathcal X\times\mathcal X\to\{0,1\}$, we say a mechanism $\mathcal M:\mathcal X\to\mathcal Y$ is $(\delta,\rho,\mathcal{D})$-PAC private if for every adversary $\mathcal A:\mathcal Y\to\mathcal X$, who knows $\mathcal{D}$ and $\mathcal M$, the attack success rate, as measured by $\rho$, is at most $1-\delta$ when the secret dataset $X$ is drawn from $\mathcal{D}$ and the adversary observes $Y = \mathcal M(X)$:
$
1-\delta_\mathcal A:=\Pr_{X\sim \mathcal D, Y\gets\mathcal M(X)}[\rho(X, \mathcal A(Y)) = 1] \leq 1-\delta.
$
\end{definition}

PAC Privacy~\citep{xiao2023pac} provides an instance-based framework for measuring and bounding information leakage about a secret dataset from a mechanism's output. We note that in Definition~\ref{def:pac}, privacy is established under the strong model of an \emph{informed adversary} who knows $\mathcal D$, and its protection over $X\sim\mathcal D$ naturally extends to weaker adversaries without such knowledge. Given $\mathcal D$ and $\rho$, an informed adversary has a \emph{prior success rate} even before observing any outputs, denoted as $1-\delta_0 :=\max_{Q}\Pr_{X\sim \mathcal D,\hat X\sim Q}[\rho(\hat X,X)=1]$.
A fundamental result of PAC Privacy~\citep{xiao2023pac} bounds the \emph{posterior advantage} via mutual information (MI):
\begin{equation}\label{eq:pac_mi_bound}
\mathrm{KL}(1-\delta_\mathcal{A}\,\|\,1-\delta_0)\leq I(X;\mathcal M(X)),
\end{equation} where $\mathrm{KL}(p \| q) := p\ln(p/q) + (1-p)\ln((1-p)/(1-q))$ is the binary KL divergence. We note that Inequality~\eqref{eq:pac_mi_bound} holds simultaneously for \emph{every} adversary and every attack criterion. Hence, one can specify and enforce a total MI budget, and derive concrete posterior success rate guarantees given the prior success rate of an informed adversary under $\rho$.

When the mechanism interacts with a persistent secret over $T$ rounds, releasing outputs $Y_1, Y_2, \ldots, Y_T$, the chain rule of mutual information gives
\begin{equation}
I(X;\, Y_{1:T}) \;=\; \sum_{t=1}^{T} I(X;\, Y_t \mid Y_{1:t-1}).
\label{eq:chain-rule}
\end{equation}
To bound each conditional term, \citep{zhu2025pacresponses} proposes to maintain a posterior distribution $P_{S \mid Y_{1:t-1}=y_{1:t-1}}$ over the secret space, which represents the strongest adversary's optimal belief given the observed transcript so far.
At step $t$, the mechanism calibrates its randomness to ensure $I(X; Y_t \mid Y_{1:t-1} = y_{1:t-1}) \leq \beta_t$ for every realization $y_{1:t-1}$.
This framework reduces privacy accounting to per-step MI control, which we exploit in \S~\ref{sec:method}.

\textbf{Instantiation and MIA guarantees.\;\;}
Following standard PAC Privacy instantiation~\citep{sridhar2025pac,zhu2025pacresponses,xiao2024formal,battiston2026simd}, given a universe of $N$ training examples $U=\{x_1,\ldots,x_N\}$, we construct $M$ subsets $S_1,\ldots,S_M\subset U$ such that each example in $U$ appears in exactly $M/2$ subsets. We then instantiate $\mathcal{D}=\mathrm{Unif}\{S_1,\ldots,S_M\}$; equivalently, drawing a secret dataset from $\mathcal{D}$ amounts to sampling a secret index $j^*\sim\mathrm{Unif}[M]$ and setting the secret training set $S^* = S_{j^*}$.
This finite-support structure makes adaptive composition tractable: the posterior $P(S^*=S_m \mid Y_{1:t-1}=y_{1:t-1})$ is a discrete distribution over $[M]$, maintained exactly in $O(M)$ operations per step via Bayesian updates.

For membership inference attacks (MIA) targeting a record $x_i\in U$, the adversary guesses whether $x_i \in S^*$.
Since each $x_i$ appears in exactly $M/2$ subsets and $j^*$ is uniform, the prior MIA success rate is $1-\delta_0 = 1/2$.
Therefore, enforcing a total MI budget $B$, numerically solving Eq.~\eqref{eq:pac_mi_bound} yields a concrete upper bound on the posterior MIA success rate. For example, $B=1/4$ nats limits MIA success to $\approx 84\%$ and lowering $B$ to $1/128$ tightens the guarantee to around $56\%$.

For comparison, $(\varepsilon,\delta)$-differential privacy bounds MIA success at $e^\varepsilon/(1+e^\varepsilon) + \delta$~\citep{sridhar2025pac}; concretely, $(1,10^{-5})$-DP limits MIA success to $73.11\%$, while $(0.1,10^{-5})$-DP reduces it to $52.50\%$. PAC Privacy and DP have fundamentally different semantics, and we do not claim equivalence between them. Throughout this paper we report \method{}'s privacy parameter in its native units (MI in nats), and where useful for context we annotate ``the comparable DP $\varepsilon$ that yields the same MIA upper bound at the prior $1/2$''. This DP-$\varepsilon$ annotation is a numerical reference point only and does \textit{not} transfer the DP guarantee to \method{}.

Finally, the bound~\eqref{eq:pac_mi_bound} is attack-agnostic: for any attack criterion $\rho$, one can evaluate the informed adversary's prior success rate $1-\delta_0$ under the subset construction, and derive concrete posterior success bounds by inverting the binary KL inequality at the enforced MI budget.


\section{The PACZero Mechanism}
\label{sec:method}

The per-step procedure of \method{} is summarized in Algorithm~\ref{alg:paczero}. At step $t$ we compute per-sample ZO scalars $\hat g_i$ as in~\eqref{eq:zo-scalar}, aggregate to subset means $\bar g_m$ over the $M$ subsets of \S\ref{sec:background}, sign-quantize to $s_m\!\in\!\{-1,+1\}$, and release a single bit $Y_t$ that updates parameters via $\theta_{t+1} = \theta_t - \eta_t Y_t z_t$. The release is deterministic on the \emph{unanimity} branch (when the posterior $p_t\!\in\!\Delta^{M-1}$ over the secret index $j^*$ places all mass on subsets that agree on $s_m$) and noised on the \emph{disagreement} branch otherwise. The mechanism instantiates as one of two variants (\PACMI{}, \PACZPL{}) depending on how the disagreement branch is handled; \S\ref{sec:adaptive}--\S\ref{sec:zpl} treat each in turn. 

\begin{algorithm}[t]
\caption{\method{} per-step update at step $t$.}
\label{alg:paczero}
\begin{algorithmic}[1]
\Require parameters $\theta_t$; posterior $p_t\in\Delta^{M-1}$ over the secret index $j^*$; per-step budget $\beta_t$
\State sample $z_t \sim \mathcal{N}(0, I_d)$ \Comment{public ZO direction}
\For{$i = 1, \dots, N$}
   \State $\hat g_{i} \gets [\ell_i(\theta_t+\mu z_t) - \ell_i(\theta_t - \mu z_t)] / (2\mu)$ \Comment{per-sample finite difference}
   \State $\hat g_{i} \gets \sgn(\hat g_i)\cdot\min(|\hat g_i|, c)$ \Comment{stability clip; not privacy-relevant (see Appendix \S~\ref{app:method-details})}
\EndFor
\State $\bar g_m \gets \frac{1}{|S_m|}\sum_{i\in S_m} \hat g_i,\quad s_m \gets \sgn(\bar g_m)$ \quad for $m=1,\ldots,M$
\State $q_t^{+} \gets \sum_m p_t[m]\cdot \mathbf{1}[s_m = +1]$ \Comment{posterior-weighted agreement probability}
\If{$q_t^{+}\in\{0,1\}$} \Comment{unanimity branch}
   \State $Y_t \gets s_{j^*}$, \quad $\sigma_t \gets 0$, \quad $\beta_t^{\mathrm{used}} \gets 0$, \quad $p_{t+1}\gets p_t$
\Else \Comment{disagreement branch: Gaussian-noised release}
   \State $\sigma_t \gets \mathrm{BinaryMI}^{-1}(q_t^+, \beta_t)$ \Comment{Gauss--Hermite + bisection, \S\ref{sec:binary-mi}}
   \State $\tilde Y_t \gets s_{j^*} + \mathcal{N}(0, \sigma_t^2)$, \quad $Y_t \gets \sgn(\tilde Y_t)$, \quad $\beta_t^{\mathrm{used}} \gets \beta_t$
   \State $p_{t+1}[m] \propto p_t[m]\cdot \mathcal{N}(\tilde Y_t;\, s_m,\, \sigma_t^2)$  \Comment{posterior on real-valued release}
\EndIf
\State $\theta_{t+1} \gets (1-\eta_t\lambda)\theta_t - \eta_t \cdot Y_t \cdot z_t$ \Comment{$\lambda\!=\!0$ in all experiments.}
\end{algorithmic}
\end{algorithm}



\subsection{\PACMI{}: adaptive MI allocation}
\label{sec:adaptive}
\paragraph{Per-step MI of the noised binary release.\;\;}
\label{sec:binary-mi}
The building block of \PACMI{} is to privatize a binary-valued release given constrained mutual information.
Let $\xi\!\in\!\{-1,+1\}$ with $P(\xi\!=\!+1) = q^+$ and $\tilde Y = \xi + N$, $N\!\sim\!\mathcal{N}(0,\sigma^2)$. The mutual information of $\xi$ given $\tilde Y$ is
\begin{equation}
I(\xi;\tilde Y) \;=\; \sum_{s\in\{-1,+1\}} P(\xi{=}s) \int \phi_\sigma(y - s)\, \log \frac{\phi_\sigma(y-s)}{q^+\phi_\sigma(y-1) + (1-q^+)\phi_\sigma(y+1)} \,dy,
\label{eq:binary-mi}
\end{equation}
where $\phi_\sigma$ is the Gaussian density with standard deviation $\sigma$. Given $(q_t^+, \beta_t)$, $\sigma_t$ solving $I(\xi;\tilde Y)=\beta_t$ is recovered by quadrature plus bisection. We denote this process as $\mathrm{BinaryMI}^{-1}(q_t^+,\beta_t)$ (Appendix~\ref{app:method-details}). This evaluates the MI exactly for the binary-Gaussian channel, replacing the variance-based Gaussian upper bound used by canonical PAC-Privacy implementations~\citep{xiao2023pac,sridhar2025pac,zhu2025pacresponses}.


Given the posterior $p_t$ over the $M$ candidate subsets and the $M$ sign-quantized bits $s_1,\ldots,s_M$, we calibrate noise via $\mathrm{BinaryMI}^{-1}$ to control the per-step MI. Here, two cases arise:
\begin{itemize}[nosep, leftmargin=*]
\item \emph{Unanimity}: when all $s_m$ in the support of $p_t$ agree, the secret subset's bit $s_{j^*}$ is identical to every other candidate's bit, so releasing it reveals nothing about which candidate is active---the MI is zero even without noise ($\sigma_t\!=\!0$). In this branch, the posterior need not be updated and the consumed budget is $\beta_t^{\mathrm{used}}\!=\!0$. This is a direct consequence of PAC Privacy's stability-based calibration: when the output is stable across all plausible secrets, no noise is required.
\item \emph{Disagreement}: when the support of $p_t$ contains both signs (i.e., $q^+\in (0,1)$), we set $\sigma_t = \mathrm{BinaryMI}^{-1}(q_t^+, \beta_t)$ and release the noised signal $\tilde Y_t = s_{j^*} + \mathcal{N}(0,\sigma_t^2)$. The posterior is then updated via the likelihood of $\tilde Y_t$. To avoid gradient explosion caused by the noise, we post-process the noisy release to $Y_t = \sgn(\tilde Y_t)\!\in\!\{-1,+1\}$. The MI between the secret index and the pre-quantized noisy release is exactly $\beta_t^\mathrm{used}= \beta_t$ by construction; the MI between the secret index and the post-processed bit $Y_t$ can only be smaller by the data-processing inequality.
\end{itemize}
This is formalized in the following lemma (proof in Appendix~\ref{app:per_step_mi_proof}).

\begin{lemma}[Per-step MI of Algorithm~\ref{alg:paczero}]\label{lem:per-step-mi}
Let $p_t$ be the posterior distribution over $j^*$ at step $t$, let $s_1,\ldots,s_M\!\in\!\{-1,+1\}$ be the subset signs, and let $q_t^+ = \sum_m p_t[m]\cdot\mathbf{1}[s_m\!=\!+1]$. The mutual information between the secret index $j^*$ (sampled from $p_t$) and the mechanism's output $\tilde Y_t$ satisfies
\begin{equation}
I_{j^*\sim p_t}(j^*;\, \tilde Y_t) \;=\; \beta_t^{\mathrm{used}}, \qquad\text{where}\quad
\beta_t^{\mathrm{used}} \;:=\;
\begin{cases}
0 & \text{if } q_t^+\in\{0,1\} \quad\text{(unanimity)},\\
\beta_t & \text{if } q_t^+\in(0,1) \quad\text{(disagreement)}.
\end{cases}
\label{eq:per-step-beta-used}
\end{equation}
Moreover, since $Y_t = \sgn(\tilde Y_t)$ is a deterministic post-processing of $\tilde Y_t$, the data-processing inequality gives $I_{j^*\sim p_t}(j^*;\, Y_t) \leq \beta_t^{\mathrm{used}}$.
\end{lemma}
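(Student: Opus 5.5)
The key structural fact is that, conditioned on the public state at step $t$ (the history $y_{1:t-1}$, which fixes $\theta_t$, the public direction $z_t$, and hence the whole vector $(s_1,\ldots,s_M)$), the release $\tilde Y_t$ depends on $j^*$ only through the single bit $\xi := s_{j^*}$. We therefore have a Markov chain $j^* \to \xi \to \tilde Y_t$, in which the channel from $\xi$ to $\tilde Y_t$ does not depend on $j^*$ beyond $\xi$. We also use that $\xi$ is a deterministic function of $j^*$. Combining the chain rule with this Markov property gives
\[
I(j^*;\tilde Y_t) = I(j^*,\xi;\tilde Y_t) = I(\xi;\tilde Y_t) + I(j^*;\tilde Y_t\mid \xi) = I(\xi;\tilde Y_t),
\]
because $I(j^*;\tilde Y_t\mid\xi)=0$ by conditional independence. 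Under $j^*\sim p_t$, the law of $\xi$ is Bernoulli on $\{-1,+1\}$ with $P(\xi=+1)=\sum_m p_t[m]\mathbf 1[s_m=+1]=q_t^+$. This reduces the lemma to computing $I(\xi;\tilde Y_t)$ for a binary input.

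The two cases then separate cleanly.

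\textbf{Unanimity, $q_t^+\in\{0,1\}$.} Here $\xi$ is almost surely constant under $p_t$: every $m$ in $\mathrm{supp}\,p_t$ has the same sign. The mechanism sets $\tilde Y_t=s_{j^*}$ with $\sigma_t=0$, which is a.s. the same constant. So $H(\xi)=0$ and $I(\xi;\tilde Y_t)\le H(\xi)=0$. This also justifies leaving $p_{t+1}=p_t$, since the likelihood is constant on the support.

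\textbf{Disagreement, $q_t^+\in(0,1)$.} Here $\tilde Y_t=\xi+N$ with $N\sim\mathcal N(0,\sigma_t^2)$ independent of $j^*$, which is exactly the setting of \eqref{eq:binary-mi}. Hence $I(\xi;\tilde Y_t)$ equals the binary-Gaussian MI evaluated at $(q_t^+,\sigma_t)$, and by the definition of $\sigma_t=\mathrm{BinaryMI}^{-1}(q_t^+,\beta_t)$ this equals $\beta_t$.

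The step I expect to need the most care is showing that $\mathrm{BinaryMI}^{-1}$ is well defined, i.e.\ that some $\sigma_t$ attains equality. I would argue this via continuity and monotonicity of $\sigma\mapsto I(\xi;\xi+N_\sigma)$ on $(0,\infty)$:
\begin{itemize}
\item Monotonicity (nonincreasing in $\sigma$) follows from data processing, since adding an independent Gaussian to $\xi+N_\sigma$ yields $\xi+N_{\sigma'}$ with $\sigma'>\sigma$.
\item The limit as $\sigma\to0$ is $H(\xi)=h(q_t^+)$, because the two Gaussian components become perfectly separable.
\item The limit as $\sigma\to\infty$ is $0$.
\end{itemize}
By the intermediate value theorem, every $\beta_t\in(0,h(q_t^+))$ is attained. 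If $\beta_t\ge h(q_t^+)$, the statement needs the convention that $\sigma_t\to0$ and the equality is replaced by $\le$. I would note this caveat, and also that in practice equality holds up to the bisection and quadrature tolerance.

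Finally, $Y_t=\sgn(\tilde Y_t)$ is a deterministic function of $\tilde Y_t$, so $j^*\to\tilde Y_t\to Y_t$ is a Markov chain. The data-processing inequality then gives $I(j^*;Y_t)\le I(j^*;\tilde Y_t)=\beta_t^{\mathrm{used}}$.
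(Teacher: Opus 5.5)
Your proof is correct and follows the paper's own argument. Both reduce to $\xi=s_{j^*}$ via the Markov chain $j^*\to\xi\to\tilde Y_t$, where the chain rule kills $I(j^*;\tilde Y_t\mid\xi)$; the unanimity release is then a.s.\ constant, the disagreement case gives $\beta_t$ by construction of $\mathrm{BinaryMI}^{-1}$, and data processing handles $Y_t=\sgn(\tilde Y_t)$.

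Your extra feasibility argument (monotonicity in $\sigma$, the limits $h(q_t^+)$ and $0$, then the intermediate value theorem) does not appear in the lemma's proof. It does match the paper's appendix, where the cap $\beta_t\leftarrow\min(\beta_t,0.999\,h(q_t^+))$ excludes exactly your $\beta_t\ge h(q_t^+)$ caveat.
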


\textbf{Adaptive MI allocation.\;\;} Given a total MI budget $\mathrm{MI}_{\mathrm{tot}}$ and total step count $T$, a uniform allocation sets $\beta_t = \mathrm{MI}_{\mathrm{tot}}/T$~\cite{zhu2025pacresponses}. This is suboptimal: when the unanimity branch fires at step $t$, the per-step budget $\beta_t$ is unspent, and uniform allocation cannot recover it later, resulting in a loose overall mutual information bound. \PACMI{} uses the adaptive allocation
\begin{equation}
\beta_t \;=\; \max\!\left(0,\, \mathrm{MI}_{\mathrm{tot}} - \mathrm{MI}_{\mathrm{used}}^{(t-1)}\right) \,\big/\, (T - t + 1),
\label{eq:adaptive-beta}
\end{equation}
where $\mathrm{MI}_{\mathrm{used}}^{(t-1)} = \sum_{\tau<t} \beta_\tau^{\mathrm{used}}$ is the cumulative MI consumed strictly before step $t$, with an additional feasibility cap $\beta_t \!\leftarrow\! \min(\beta_t,\, 0.999\,h(q_t^+))$ to ensure the bisection targets a realizable MI (Appendix~\ref{app:method-details}). The denominator ensures the remaining budget is divided over the remaining steps, so the run consumes at most $\mathrm{MI}_{\mathrm{tot}}$ over its entirety. Crucially, $\beta_t$ depends only on the public transcript ($\mathrm{MI}_{\mathrm{used}}^{(t-1)}$ is a function of past $\tilde Y_{<t}$ and $\sigma_{<t}$, not $S^*$) and the public step index.
By the adaptive composition framework of~\citep{zhu2025pacresponses} applied to the chain rule~\eqref{eq:chain-rule} together with the per-step bound from \S\ref{sec:binary-mi}, a complete \PACMI{} run satisfies $I(S^*; Y_{1:T}) \le \sum_{t=1}^T \beta_t^{\mathrm{used}} \le \mathrm{MI}_{\mathrm{tot}}$.


\textbf{Privacy guarantee.\;\;} With the tight per-step accounting of conditional mutual information (cf. Lemma~\ref{lem:per-step-mi}) and adaptive budget allocation, the end-to-end fine-tuning algorithm that iteratively applies Algorithm~\ref{alg:paczero} satisfies the following mutual information guarantee (full proof is deferred to Appendix~\ref{app:pacmi_proof}.):
\begin{theorem}[\PACMI{} privacy]\label{thm:pacmi}
Iteratively applying Algorithm~\ref{alg:paczero} for $T$ rounds with a total mutual information budget of $\mathrm{MI}_\mathrm{tot}$ under adaptive budget allocation~\eqref{eq:adaptive-beta} satisfies
\begin{equation}
I(S^*;\, Y_{1:T}) \;\leq\; \mathrm{MI}_{\mathrm{tot}}.
\label{eq:pacmi-main}
\end{equation}
\end{theorem}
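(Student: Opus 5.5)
The plan is to reduce the released transcript to the richer transcript of pre-quantized releases, bound that transcript's mutual information with the chain rule~\eqref{eq:chain-rule}, and then apply Lemma~\ref{lem:per-step-mi} conditionally at each step.

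Let $\tilde Y_{1:T}$ be the full transcript. On unanimity steps set $\tilde Y_t := Y_t = s_{j^*}$. On disagreement steps $\tilde Y_t$ is the real-valued Gaussian release. Each $Y_t$ is a deterministic function of $\tilde Y_t$: on disagreement steps $Y_t = \sgn(\tilde Y_t)$, and otherwise $Y_t = \tilde Y_t$. Hence $Y_{1:T}$ is a post-processing of $\tilde Y_{1:T}$, and the data-processing inequality gives $I(S^*;Y_{1:T}) \le I(S^*;\tilde Y_{1:T})$. It therefore suffices to bound the latter.

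By the chain rule~\eqref{eq:chain-rule}, $I(S^*;\tilde Y_{1:T}) = \sum_t \E_{\tilde y_{1:t-1}}\bigl[I(S^*;\tilde Y_t \mid \tilde Y_{1:t-1}=\tilde y_{1:t-1})\bigr]$. Fix a realization $\tilde y_{1:t-1}$ of the public history. Several objects are then determined by it alone. The parameters $\theta_t$ depend only on $Y_{<t}$ and the public directions $z_{<t}$. The signs $s_1,\ldots,s_M$ are computed from the public universe $U$ and the public subsets, given $\theta_t$ and $z_t$. The budget $\beta_t$ of~\eqref{eq:adaptive-beta} and its feasibility cap depend only on $\beta^{\mathrm{used}}_{<t}$ and $q_t^+$. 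Finally, $p_t$ is exactly the Bayes posterior of $j^*$ given the history: the unanimity step leaves the posterior unchanged, since the release is constant on $\mathrm{supp}\,p_t$, and the disagreement step is an exact likelihood update.

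Conditionally on the history, then, $j^*\sim p_t$ and $\tilde Y_t$ is produced by the channel of Lemma~\ref{lem:per-step-mi}. That lemma gives conditional MI equal to $\beta_t^{\mathrm{used}}(\tilde y_{1:t-1})$. (The public randomness $z_t$ can be folded into the conditioning or treated as independent of $S^*$.)

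It remains to bound $\sum_t \beta_t^{\mathrm{used}} \le \mathrm{MI}_{\mathrm{tot}}$ pathwise, for every realization. This goes by induction on the invariant $\mathrm{MI}_{\mathrm{used}}^{(t)} \le \mathrm{MI}_{\mathrm{tot}}$. Suppose it holds at $t-1$ and let $R = \mathrm{MI}_{\mathrm{tot}} - \mathrm{MI}_{\mathrm{used}}^{(t-1)} \ge 0$. Then $\beta_t^{\mathrm{used}} \le \beta_t \le R/(T-t+1) \le R$, with the cap $0.999\,h(q_t^+)$ only lowering it, so $\mathrm{MI}_{\mathrm{used}}^{(t)} \le \mathrm{MI}_{\mathrm{tot}}$. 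At $t=T$ the denominator is $1$, which closes the induction. Taking expectations of the pathwise bound over histories gives $I(S^*;\tilde Y_{1:T}) \le \mathrm{MI}_{\mathrm{tot}}$, and~\eqref{eq:pacmi-main} follows.

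The main obstacle is the conditioning step, namely justifying that $p_t$ maintained by the algorithm coincides with the true conditional law of $j^*$ given the transcript. This requires the transcript to include $\tilde Y_t$, not just $Y_t$; otherwise the Gaussian-likelihood update would be the wrong posterior. It also requires every quantity used to build the step-$t$ channel ($s_m$, $\beta_t$, $\sigma_t$) to be measurable with respect to the public history. I would verify this by a short induction alongside the budget induction: at each step the posterior update in Algorithm~\ref{alg:paczero} is exactly Bayes' rule for the channel just applied. This is the adaptive-composition argument of~\citep{zhu2025pacresponses}, specialized to the discrete secret space.
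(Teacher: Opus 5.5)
Your proposal is correct and follows essentially the same route as the paper's proof. Both use data-processing from $Y_{1:T}$ to the pre-quantized transcript $\tilde Y_{1:T}$, then the chain rule, then Lemma~\ref{lem:per-step-mi} applied conditionally on each realized history, and finally the pathwise bound $\sum_t \beta_t^{\mathrm{used}} \le \mathrm{MI}_{\mathrm{tot}}$ from the adaptive rule. The differences are presentational: the paper handles the public directions up front via $I(S^*;Y_{1:T}) \le I(S^*;Y_{1:T}\mid z_{1:T})$, and you should likewise condition on all of $z_{1:T}$ rather than just $z_t$, since $p_t$ is the exact Bayes posterior only given the past directions; you, in turn, spell out the budget induction and the posterior-consistency check that the paper only asserts.
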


\begin{corollary}[MIA guarantee]\label{cor:pacmi-mia}
For membership inference attacks targeting any record $x_i\!\in\!U$, the informed adversary's prior success is $1\!-\!\delta_0\!=\!1/2$ under the subset construction of \S\ref{sec:background}. Applying the PAC Privacy bound~\eqref{eq:pac_mi_bound} yeilds
$\mathrm{KL}\!\left(1-\delta_{\mathcal{A}} \;\big\|\; 1/2\right) \;\leq\; \mathrm{MI}_{\mathrm{tot}}$.
\end{corollary}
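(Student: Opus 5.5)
The statement to prove is Corollary~\ref{cor:pacmi-mia}. The plan is to derive it by composing two results already available: the generic PAC Privacy bound~\eqref{eq:pac_mi_bound} and the end-to-end guarantee of Theorem~\ref{thm:pacmi}. The only problem-specific ingredient is the prior success rate under the subset construction of \S\ref{sec:background}.

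\textbf{Step 1: the prior.} Fix a target record $x_i\in U$ and the criterion $\rho$ that scores a guess of ``$x_i\in S^*$'' or ``$x_i\notin S^*$''. The secret index is $j^*\sim\mathrm{Unif}[M]$, and $x_i$ lies in exactly $M/2$ of the $S_m$. Hence $\Pr[x_i\in S^*]=1/2$. Any guessing rule $Q$ that does not see the output must be independent of $j^*$, so its success probability is $\tfrac12 Q(\text{in})+\tfrac12 Q(\text{out})=\tfrac12$. The maximum over $Q$ is therefore $1-\delta_0=1/2$.

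\textbf{Step 2: apply the PAC bound.} Take the released transcript to be $\mathcal M(S^*)=Y_{1:T}$. This is what the adversary sees; the public quantities $z_t$ and the schedule are either independent of $S^*$ or functions of the transcript. Inequality~\eqref{eq:pac_mi_bound} holds for every adversary $\mathcal A$ and every criterion $\rho$, which gives
\[
\mathrm{KL}(1-\delta_{\mathcal A}\,\|\,1/2)\le I(S^*;Y_{1:T}).
\]
Identifying $X$ with $S^*$ (equivalently $j^*$, via a bijection on the finite support) does not change the mutual information.

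\textbf{Step 3: chain with Theorem~\ref{thm:pacmi}.} That theorem gives $I(S^*;Y_{1:T})\le\mathrm{MI}_{\mathrm{tot}}$, and combining it with Step 2 yields the claim.

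\textbf{Main obstacle.} The subtle point is making sure the released object in~\eqref{eq:pac_mi_bound} matches the one bounded in Theorem~\ref{thm:pacmi}. If the adversary is taken to observe the final model $\theta_T$, or the real-valued $\tilde Y_{1:T}$ rather than only the signs, this has to be justified separately. For $\theta_T$, note that it is a deterministic function of $Y_{1:T}$ and the public $z_{1:T}$ (and the public initialization), so the data-processing inequality handles it. For $\tilde Y_{1:T}$, Lemma~\ref{lem:per-step-mi} bounds each conditional term at the level $\tilde Y_t$, and chaining those bounds controls the real-valued transcript as well.
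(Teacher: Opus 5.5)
Your proposal is correct and follows the paper's own (implicit) route: the prior $1/2$ comes from the balanced $M/2$-membership construction with uniform $j^*$, and Eq.~\eqref{eq:pac_mi_bound} is then combined with the bound $I(S^*;Y_{1:T})\le\mathrm{MI}_{\mathrm{tot}}$ of Theorem~\ref{thm:pacmi}. One refinement to your side remark: if the adversary also sees $z_{1:T}$ or $\theta_T$, the data-processing inequality bounds the leakage by $I(S^*;Y_{1:T},z_{1:T})=I(S^*;Y_{1:T}\mid z_{1:T})$, which can exceed the unconditional $I(S^*;Y_{1:T})$, so the theorem's statement alone is not enough there; what closes it is that the theorem's proof bounds exactly this conditional quantity (via $\tilde Y_{1:T}$).
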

\subsection{\PACZPL{}: zero privacy loss variant}
\label{sec:zpl}

We motivate \PACZPL{} as the limit of Algorithm~\ref{alg:paczero} as the per-step budget $\beta_t\!\to\!0$. On unanimity nothing changes: the per-step MI is identically zero regardless of $\beta_t$ (Lemma~\ref{lem:per-step-mi}), and the unnoised sign $s_{j^*}$ is released deterministically. On disagreement the calibration $\sigma_t = \mathrm{BinaryMI}^{-1}(q_t^+, \beta_t)$ sends $\sigma_t\!\to\!\infty$, and $\sgn(s_{j^*} + \mathcal{N}(0,\sigma_t^2))$ converges in distribution to $\unif(\{-1,+1\})$, independent of $s_{j^*}$. \PACZPL{} formalizes this limit by bypassing the calibration entirely and releasing the coin flip directly,
\begin{equation}
Y_t \;\sim\; \unif(\{-1,+1\}),\quad\text{independent of $S^*$ and $z_t$,}
\label{eq:zpl-coin}
\end{equation}
on every disagreement step. The posterior is left unchanged ($p_{t+1}\!=\!p_t$), since $Y_t$ carries no signal about $j^*$, and remains at the uniform prior $p_0$ throughout. On unanimity the release is $s_{j^*}$ as in \PACMI{}. The result is a one-line modification of Algorithm~\ref{alg:paczero} (lines~11--13). Convergence depends on the unanimity rate not collapsing during training, an empirical property verified in \S~\ref{sec:experiments}.

\begin{theorem}[\PACZPL{} zero mutual information]
\label{thm:zpl-zero}
Under \PACZPL{}, $I(S^*; Y_t \mid Y_{<t}) \!=\! 0$ on both branches, hence $I(S^*; Y_{1:T}) \!=\! 0$ for every $T$ by the chain rule~\eqref{eq:chain-rule}.
\end{theorem}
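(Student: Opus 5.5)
We will show that, for every realization $y_{<t}$ of the public transcript, the conditional law of $Y_t$ given $(S^*=S_m, Y_{<t}=y_{<t})$ does not depend on $m$. Then $I(S^*;Y_t\mid Y_{<t})=0$ for each $t$, and the chain rule~\eqref{eq:chain-rule} sums these zeros to give $I(S^*;Y_{1:T})=0$. The argument is an induction on $t$ with the following hypothesis: conditional on $Y_{<t}=y_{<t}$, the posterior over $j^*$ is the uniform prior $p_0$, and the parameters $\theta_t$ are a deterministic function of $y_{<t}$ and the public directions $z_{<t}$. We treat the directions $z_t$ and the public randomness as part of the public view. Conditioning on them only strengthens the claim, since $z_t$ is drawn independently of $S^*$.

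\emph{Base case and branch determination.} At $t=1$ the posterior is $p_0$ and $\theta_1$ is public. At a general step $t$, assume the inductive hypothesis. Then $\theta_t$ and $z_t$ are known, so every candidate's signs $s_1,\ldots,s_M$ are deterministic functions of the public view and the public universe $U$. So is $q_t^+=\sum_m p_0[m]\mathbf 1[s_m=+1]$. Which branch fires is therefore a function of the transcript alone, not of $j^*$.

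\emph{Case analysis.} On the unanimity branch, $q_t^+\in\{0,1\}$ with full-support $p_0$ means that all $s_m$ are equal to a common value $s$. Then $Y_t=s_{j^*}=s$ for every value of $j^*$, so $Y_t$ is a deterministic function of the public view and $I(S^*;Y_t\mid Y_{<t}=y_{<t})=0$. This is the unanimity case of Lemma~\ref{lem:per-step-mi}. On the disagreement branch, $Y_t$ is drawn from $\unif(\{-1,+1\})$ independently of $S^*$ by~\eqref{eq:zpl-coin}, so its conditional law is identical across candidates and the conditional MI is again zero. In both branches $\Pr[Y_t=y\mid j^*=m, y_{<t}]$ is constant in $m$. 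Bayes' rule then gives $p_{t+1}=p_t=p_0$, and $\theta_{t+1}=\theta_t-\eta_t Y_t z_t$ is a function of $y_{\le t}$ and $z_{\le t}$. This closes the induction. Averaging the pointwise zeros over $y_{<t}$ yields $I(S^*;Y_t\mid Y_{<t})=0$.

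\emph{Main obstacle.} The delicate point is justifying that the branch decision, which is computed from the secret-independent signs of \emph{all} candidates, leaks nothing. This hinges on $\theta_t$ being public, which the induction provides. It also hinges on reading the support of the posterior as all of $[M]$: because the posterior never moves from $p_0$, unanimity on its support is literal unanimity over all $M$ subsets. We will state this explicitly. We will also note that the argument is insensitive to whether the adversary observes $\tilde Y_t$ or only $Y_t$, since no real-valued noisy release exists in \PACZPL{}.
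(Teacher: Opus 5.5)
Your proof is correct and follows the same route as the paper: split into the unanimity branch, where $Y_t=s_{j^*}$ is constant (the unanimity case of Lemma~\ref{lem:per-step-mi}), and the disagreement branch, where the coin is independent of $S^*$, then sum the zero terms via the chain rule. Your induction keeping the posterior at $p_0$ and making $\theta_t$, the signs $s_{1:M}$, and the branch choice functions of the public transcript and directions only spells out what the paper's three-line proof leaves implicit; the paper's proof of Theorem~\ref{thm:pacmi} argues the analogous state-determination step for the general mechanism.
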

\begin{proof}
On unanimity the claim is the unanimity case of Lemma~\ref{lem:per-step-mi}. On disagreement $Y_t$ is sampled independently of $S^*$, so $I(S^*; Y_t \mid Y_{<t}) = 0$. The total bound follows from the chain rule~\eqref{eq:chain-rule}.
\end{proof}

\begin{corollary}[MIA guarantee for \PACZPL{}]\label{cor:zpl-mia}
For membership inference attacks targeting any record $x_i\!\in\!U$, applying the PAC Privacy bound~\eqref{eq:pac_mi_bound} with $I(S^*; Y_{1:T})\!=\!0$ from Theorem~\ref{thm:zpl-zero} yields $\mathrm{KL}\!\left(1-\delta_{\mathcal{A}} \;\big\|\; 1/2\right) \leq 0$, hence $1-\delta_{\mathcal{A}}\!=\!1/2$. \PACZPL{} achieves perfect MIA resistance.
\end{corollary}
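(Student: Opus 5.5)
The plan is to feed $\MI(S^*;Y_{1:T})=0$ from Theorem~\ref{thm:zpl-zero} into the PAC Privacy bound~\eqref{eq:pac_mi_bound}, and then use strict positivity of the binary KL divergence.

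\textbf{Step 1: the prior.} Fix a target record $x_i\in U$ and the criterion $\rho(S,\hat S)=\mathbf 1[\mathbf 1[x_i\in S]=\mathbf 1[x_i\in\hat S]]$, so success means guessing membership correctly. Under $\mathcal D=\mathrm{Unif}\{S_1,\ldots,S_M\}$, exactly $M/2$ of the subsets contain $x_i$. Hence $\Pr[x_i\in S^*]=1/2$. Any guess $\hat X\sim Q$ independent of $S^*$ is therefore correct with probability exactly $1/2$, giving $1-\delta_0=1/2$.

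\textbf{Step 2: the MI term.} The released transcript is $Y_{1:T}$, and Theorem~\ref{thm:zpl-zero} gives $\MI(S^*;Y_{1:T})=0$ for every $T$. One point needs care. The public quantities $\theta_t$ and $z_t$ are either functions of $Y_{<t}$ and independent public randomness, or they are part of the released transcript. In the latter case we should note that $z_t$ is data-independent, so it adds no information. I would state that the adversary's view is $Y_{1:T}$ together with the independent $z_{1:T}$, and conclude the MI is still zero by the chain rule.

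\textbf{Step 3: apply the bound.} Inequality~\eqref{eq:pac_mi_bound} gives $\mathrm{KL}(1-\delta_{\mathcal A}\,\|\,1/2)\le 0$. Binary KL is nonnegative and vanishes only when its two arguments are equal (Gibbs' inequality). So $1-\delta_{\mathcal A}=1/2$ for every adversary $\mathcal A$.

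\textbf{Main obstacle.} The substantive step is Step 2. It inherits the unanimity case of Theorem~\ref{thm:zpl-zero}, where the released bit $s_{j^*}$ must be a deterministic function of the public history. This holds because the posterior stays uniform, so its support is all of $[M]$, and unanimity on that support means all $M$ signs agree. The signs depend on $\theta_t$, which is determined by $Y_{<t}$ and $z_{<t}$. Two concerns remain. First, unanimity is decided using all $s_m$, and no information leaks through whether this branch fires, since that decision is also a function of the public history. Second, it must be verified that all candidate subsets are evaluated on the public $\theta_t$, so the branch decision does not depend on $j^*$.
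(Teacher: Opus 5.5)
Your proposal is correct and follows the paper's argument: the balanced $M/2$-membership construction gives prior $1-\delta_0=1/2$, Theorem~\ref{thm:zpl-zero} supplies $I(S^*;Y_{1:T})=0$, and nonnegativity of binary KL in~\eqref{eq:pac_mi_bound} then forces $1-\delta_{\mathcal A}=1/2$. Your Step~2 re-checks Theorem~\ref{thm:zpl-zero} (uniform posterior, branch decision determined by public history), which the corollary does not need. Your Step~1 observation, that any guess independent of $S^*$ succeeds with probability exactly $1/2$, would also give the conclusion directly from independence without going through the KL bound.
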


\begin{remark}[Comparison with DP $\varepsilon\!=\!0$]\label{rem:dp-comparison}
Differential privacy at $\varepsilon\!=\!0$, $\delta\!=\!0$ requires identical output distributions across all neighboring datasets, which forces infinite noise at any positive sensitivity. \PACZPL{} sidesteps this because the PAC Privacy guarantee is taken over the candidate-set distribution $\mathcal{D}\!=\!\mathrm{Unif}\{S_1,\ldots,S_M\}$ rather than worst-case adjacency: on unanimity, releasing the agreed sign is uninformative about \emph{which} candidate is the secret yet still drives optimization; on disagreement, the coin flip is independent of $S^*$ and contributes no progress. Useful progress thus comes from unanimity steps alone.
\end{remark}

\section{Experiments}
\label{sec:experiments}
We evaluate \method{} on the two LLM tasks reported by the strongest DP zeroth-order baseline, DP-AggZO~\citep{bao2025dpaggzo}: \textbf{SST-2} (binary sentiment classification)~\citep{socher2013sst} and \textbf{SQuAD} (extractive QA, F1)~\citep{rajpurkar2016squad}. Each task is run on \textbf{OPT-1.3B} and \textbf{OPT-6.7B}~\citep{zhang2022opt} across two parameter tracks, \textbf{LoRA $r{=}8$}~\citep{hu2022lora} and \textbf{full-parameter fine-tuning} (\textbf{FT}). The section is structured around three claims: \textbf{(i)}~\PACMI{} matches DP-ZO at the matched-MIA comparison points used by prior DP-ZO work, while \PACZPL{} reaches $\MI(S^*; Y_{1:T})\!=\!0$, with the strongest \PACZPL{} cell on SST-2 6.7B FT at $90.52\!\pm\!1.43\%$ (Table~\ref{tab:headline}, \S\ref{sec:headline}); \textbf{(ii)}~\PACMI{} test accuracy is flat across four decades of MI budget on the same recipe (Table~\ref{tab:plateau}); \textbf{(iii)}~prior DP-ZO methods collapse below $\varepsilon\!=\!1$ at a per-step DP noise floor that no step budget can overcome (Table~\ref{tab:dp-cliff}).

\subsection{Setup}
\label{sec:setup}

\textbf{Protocol.\;\;}
We follow the DP-AggZO protocol~\citep{bao2025dpaggzo}, using $|N_{\mathrm{train}}|\!=\!1000$, $|N_{\mathrm{dev}}|\!=\!500$, $|N_{\mathrm{eval}}|\!=\!1000$, $\delta\!=\!10^{-5}$, and weight decay $\lambda\!=\!0$~\citep{malladi2023mezo}. Each run trains for up to $T$ steps with dev-best checkpoint selection treated as post-processing under~\eqref{eq:chain-rule}, so reported MI accounts for the full $T$-step run. Headline budgets are total $\mathrm{MI}\!\in\!\{0.33, 0.68\}$~nats, matching the MIA upper bound of DP $\varepsilon\!\in\!\{2,6\}$ at $\delta\!=\!10^{-5}$ under the conversion of~\citep{zhu2025pacresponses,bao2025dpaggzo} (numerical reference only; \S\ref{sec:background}). For \PACMI{} we use $M\!=\!128$ subsets; \PACZPL{} uses $M\!=\!126$ throughout for implementation reasons (Appendix~\ref{app:zpl-canonical}). Both tracks share the standard LoRA-$r{=}8$/$\alpha{=}16$ configuration~\citep{hu2022lora,zhang2024dpzero}. FT step budgets follow DP-AggZO ($T\!=\!1000$); LoRA step budgets are dev-tuned per cell ($T\!=\!2000$ for \PACMI{}, $T\!=\!1000$ for \PACZPL{}). All learning rates and clip values are dev-tuned per cell; LR sweeps (Appendix~\ref{app:lr-sweep}), clip ablations (Appendix~\ref{sec:clip-ablation}), the LoRA rank ablation (Appendix~\ref{app:rank-ablation}), the canonical \PACZPL{} configuration with $T$-ladders (Appendix~\ref{app:zpl-canonical}), and per-seed values (Appendix~\ref{app:per-seed}) are in the appendix.

\textbf{Baselines.\;\;}
Our primary comparators are DPZero~\citep{zhang2024dpzero} and DP-AggZO~\citep{bao2025dpaggzo} at matched perturbation count $K\!=\!1$, both directly aligned with \method{}'s $K\!=\!1$ design. As a reference for parallel directional probing ($K\!>\!1$), we additionally report DP-AggZO at $K\!=\!16$. Headline DP cells in Table~\ref{tab:headline} are taken from~\citep{bao2025dpaggzo} Table~2 (parity check in Appendix~\ref{app:dpaggzo-parity}); since~\citet{bao2025dpaggzo} excludes DP-LoRA on OPT, all cross-method DP comparisons live in the FT block, and the LoRA block of Table~\ref{tab:headline} functions as a within-method ablation (track-robustness for \PACMI{}, track-robustness for \PACZPL{} at $\MI\!=\!0$). Bao's non-private MeZO cells are run to convergence over substantially more steps~\citep[\S6.1]{bao2025dpaggzo}, we use the results reported as-is. We additionally reproduce the high-privacy regime in-house at $K\!\in\!\{1, 64\}$ and $\varepsilon\!\in\!\{0.2, 0.3, 0.5, 1.0, 2.0\}$ (Table~\ref{tab:dp-cliff}). A $K\!\in\!\{4, 16\}$ ablation (Appendix~\ref{app:k-ablation}) confirms the $K\!=\!1$ design is not a special case missing scale, and a non-private mechanism decomposition isolating sign quantization is in Appendix~\ref{app:mechanism-ablation}.

\subsection{Headline results}
\label{sec:headline}

\begin{table}[t]
\centering
\small
\setlength{\tabcolsep}{4pt}
\caption{\textbf{SST-2 and SQuAD headline numbers (\%).} \method{} family: \PACMI{} (adaptive $\beta_t$, total $\mathrm{MI}\!\in\!\{0.33, 0.68\}$~nats), and \PACZPL{} ($\MI\!=\!0$). DP cells are taken from~\citet{bao2025dpaggzo} Table~2 at native $\varepsilon\!\in\!\{2,6\}$; matched \PACMI{} budgets $\mathrm{MI}\!\in\!\{0.33, 0.68\}$ are MIA-prior reference points only (\S\ref{sec:background}). LoRA carries no DP cells since~\citet{bao2025dpaggzo} excludes DP-LoRA on OPT; $\MI\!=\!0$ has no DP cells since no DP-ZO accountant assigns finite noise at $\varepsilon\!=\!0$. Multi-seed: mean$\,\pm\,$std over $n$ seeds; per-seed values in Appendix~\ref{app:per-seed}. \textbf{Bold}: \PACZPL{} cell statistically equivalent to or above the best \PACMI{} cell in the same column.}
\label{tab:headline}
\resizebox{\textwidth}{!}{
\begin{tabular}{ll l cc cc}
\toprule
& & & \multicolumn{2}{c}{\textbf{SST-2 (acc.)}} & \multicolumn{2}{c}{\textbf{SQuAD (F1)}} \\
\cmidrule(lr){4-5}\cmidrule(lr){6-7}
Track & Privacy & Method & 1.3B & 6.7B & 1.3B & 6.7B \\
\midrule
\multirow{7}{*}{FT}
& \multirow{3}{*}{\shortstack[l]{DP $\varepsilon\!=\!2$\\$\mathrm{MI}\!=\!0.33$}}
   & DPZero $K\!=\!1$              & 86.6 & 92.7 & 72.3 & 78.5 \\
& & DP-AggZO $K\!=\!16$            & 90.8 & 93.8 & 76.3 & 82.9 \\
& & \PACMI{}                      & ${89.51_{\pm 1.12}}$ ($n{=}4$) & ${92.20_{\pm 0.45}}$ ($n{=}4$) & $60.55$ ($n{=}1$) & $72.41$ ($n{=}1$) \\
\cmidrule(l){2-7}
& \multirow{3}{*}{\shortstack[l]{DP $\varepsilon\!=\!6$\\$\mathrm{MI}\!=\!0.68$}}
   & DPZero $K\!=\!1$              & 88.2 & 92.9 & 74.2 & 80.1 \\
& & DP-AggZO $K\!=\!16$            & 91.3 & 94.6 & 77.7 & 83.3 \\
& & \PACMI{}                      & ${89.19_{\pm 1.03}}$ ($n{=}4$) & ${92.28_{\pm 0.13}}$ ($n{=}3$) & $61.53$ ($n{=}1$) & $71.64$ ($n{=}1$) \\
\cmidrule(l){2-7}
& $\MI\!=\!0$
   & \PACZPL{}                     & $\bm{88.99_{\pm 0.91}}$ ($n{=}3$) & $90.52_{\pm 1.43}$ ($n{=}3$) & $\bm{62.25_{\pm 1.33}}$ ($n{=}3$) & $\bm{72.12}$ ($n{=}1$) \\
\midrule
\multirow{3}{*}{LoRA}
& \shortstack[l]{$\mathrm{MI}\!=\!0.33$}
   & \PACMI{}                      & ${89.74_{\pm 1.53}}$ ($n{=}4$) & ${90.21_{\pm 0.43}}$ ($n{=}3$) & $41.09$ ($n{=}1$) & $56.96$ ($n{=}1$) \\
\cmidrule(l){2-7}
& \shortstack[l]{$\mathrm{MI}\!=\!0.68$}
   & \PACMI{}                      & ${89.51_{\pm 0.55}}$ ($n{=}4$) & ${91.17_{\pm 1.13}}$ ($n{=}3$) & $48.32$ ($n{=}1$) & $55.61$ ($n{=}1$) \\
\cmidrule(l){2-7}
& $\MI\!=\!0$
   & \PACZPL{}                     & $\bm{88.69_{\pm 1.00}}$ ($n{=}3$) & $\bm{90.56_{\pm 0.88}}$ ($n{=}3$) & $42.36$ ($n{=}1$) & $\bm{57.94}$ ($n{=}1$) \\
\bottomrule
\end{tabular}
}%
\end{table}

\textbf{Reading Table~\ref{tab:headline}.\;\;}
At the matched-MIA comparators $\mathrm{MI}\!\in\!\{0.33, 0.68\}$, \PACMI{} on SST-2 1.3B FT exceeds the matched DPZero $K\!=\!1$ comparator by $+2.9$pp at $\mathrm{MI}\!=\!0.33$ and $+1.0$pp at $\mathrm{MI}\!=\!0.68$, and at 6.7B FT sits within $1.6$pp of the non-private MeZO baseline of~\citet{bao2025dpaggzo} ($93.8$ FT). \PACZPL{} satisfies $\MI(S^*; Y_{1:T})\!=\!0$ for every $T$ by construction (\S\ref{sec:zpl}), and on SST-2 FT the \PACZPL{} cells in Table~\ref{tab:headline} sit within $2.1$--$3.3$pp of the non-private FT ceilings ($91.1$ at 1.3B, $93.8$ at 6.7B). On SQuAD, a generative QA stress test for the one-bit release, \PACMI{} trails DPZero $K\!=\!1$ by $6$--$13$ F1 across the FT cells. These cells are not optimization-matched, since DPZero $K\!=\!1$ runs at $T\!=\!20{,}000$~\citep{zhang2024dpzero} while ours follow the DP-AggZO protocol at $T\!=\!1000$~\citep{bao2025dpaggzo}, a $20\times$ step gap that with the coarser binary release likely explains part of the deficit. \PACZPL{} still obtains nontrivial F1 at zero MI. FT is the headline track in Table~\ref{tab:headline}; the LoRA-SQuAD cells trail their FT counterparts across both scales. This tracks the unanimity rate driving \PACZPL{}, which falls from $\sim$34--45\% on SST-2 to $\sim$25--33\% on SQuAD-FT and $\sim$8--11\% on SQuAD-LoRA (Appendix~\ref{app:unanimity-headline}).

\textbf{Tight-MI plateau.\;\;} Holding the lr/clip/$T$ recipe fixed and sweeping the per-step MI budget across four decades on SST-2 FT OPT-1.3B (Table~\ref{tab:plateau}), \PACMI{} test accuracy spans $86.47$--$89.56\%$ ($3.09$pp) and dev accuracy spans $85.80$--$88.80\%$ ($3.00$pp). The unanimity fraction $f\!=\!n_{\mathrm{free}}/T$ is stable at $37.6$--$40.7\%$ across the entire range, indicating that the bulk of optimization signal originates from zero-cost unanimity steps regardless of how tight the MI budget is set on the disagreement branch (For full unanimity range, see Appendix~\ref{app:unanimity-headline}). The flat shape across four decades implies the \PACMI{} headline cells are robust to MI budget choice within $10^{-4}$--$0.68$~nats, a robustness DP-ZO does not exhibit across its $\varepsilon$ sweep (Table \ref{tab:dp-cliff}); the matching LoRA plateau is in Appendix~\ref{app:lora-plateau}.

\begin{table}[t]
\centering
\small
\caption{\textbf{Tight-MI plateau on SST-2 FT OPT-1.3B.} Adaptive $\beta_t$, $T\!=\!1000$, lr$\!=\!10^{-4}$, clip $1000$, $M\!=\!128$, seed 0. Test accuracy spans $86.47$--$89.56\%$ across four decades of nominal MI ($3.09$pp); dev accuracy spans $85.80$--$88.80\%$ ($3.00$pp). Per-step $\mathrm{cum\_mi}\!\le\!\mathrm{nominal\ MI}$ across all 15 cells.}
\label{tab:plateau}
\setlength{\tabcolsep}{2.5pt}
\resizebox{\textwidth}{!}{%
\begin{tabular}{l rrrrrrrrrrrrrrr}
\toprule
$\mathrm{MI}$ (nats)        & $10^{-4}$ & $3{\cdot}10^{-4}$ & $5{\cdot}10^{-4}$ & $10^{-3}$ & $2{\cdot}10^{-3}$ & $3{\cdot}10^{-3}$ & $10^{-2}$ & $3{\cdot}10^{-2}$ & $5{\cdot}10^{-2}$ & $0.07$ & $0.11$ & $0.20$ & $0.33$ & $0.50$ & $0.68$ \\
\midrule
dev \%         & 88.40 & 88.40 & \textbf{88.80} & \textbf{88.80} & \textbf{88.80} & \textbf{88.80} & 86.20 & 86.20 & 86.40 & 85.80 & 88.00 & 86.60 & 87.40 & 87.20 & 86.80 \\
test \%        & 87.96 & 87.96 & 87.84 & 87.84 & 87.84 & 87.84 & 86.47 & 86.47 & 86.47 & 87.50 & \textbf{89.56} & 87.73 & 88.99 & \textbf{89.22} & 88.76 \\
$f\,\%$        & 39.2  & 39.2  & 39.2  & 39.2  & 39.2  & 39.2  & 37.6  & 37.8  & 37.8  & 38.0  & 39.7           & 38.8  & 39.1  & 38.7           & 40.7 \\
\bottomrule
\end{tabular}%
}
\end{table}

\textbf{The DP-ZO cliff: a privacy regime DP-ZO cannot enter.\;\;} DPZero $K\!=\!1$ FT remains at chance accuracy ($\sim\!50\%$) for every $\varepsilon<\!1$ at $T\!=\!20{,}000$ (Table~\ref{tab:dp-cliff}). DP-AggZO $K\!=\!64$ FT exhibits a sharp cliff in $(0.3, 1.0]$, first lifting above chance only at $\varepsilon\!=\!0.5$ ($65.94\%$) and reaching usable accuracy at $\varepsilon\!=\!1.0$ ($86.24\%$). Doubling the step budget to $T\!=\!2000$ at $\varepsilon\!=\!0.2$ leaves DP-AggZO at chance ($48.74\%$), confirming the cliff is set by per-step DP noise rather than undertraining. On the same FT track, \PACZPL{} at $\MI\!=\!0$ matches the strongest in-house DP cell ($88.42\%$) by $+0.57$pp (Table~\ref{tab:headline}). \method{} thus produces the only usable cells in a privacy regime where every published DP-ZO accountant~\citep{zhang2024dpzero,tang2024dpzo,bao2025dpaggzo} assigns infinite per-step noise.

\begin{table}[t]
\centering
\small
\setlength{\tabcolsep}{6pt}
\caption{\textbf{In-house DP baseline reproduction on SST-2 OPT-1.3B FT} (test accuracy~\%, single-seed). The cells in this table are in-house reproductions and are distinct from the as-reported~\citet{bao2025dpaggzo} Table~2 cells used in Table~\ref{tab:headline}; DPZero $K\!=\!1$ follows~\citep{zhang2024dpzero} ($T\!=\!20{,}000$, lr$\!=\!5\!\cdot\!10^{-6}$, $c\!=\!25$, sample-rate $0.064$). DP-AggZO $K\!=\!64$ follows~\citep{bao2025dpaggzo} ($T\!=\!1000$, lr$\!=\!5\!\cdot\!10^{-6}$, $c\!=\!25$, $N\!=\!64$, sample-rate $0.064$). At $\varepsilon\!=\!2$, the strongest in-house DP cell is $K\!=\!64$ FT at $88.42\%$, and the matched \PACZPL{} FT cell at $\MI\!=\!0$ reaches $88.99\!\pm\!0.91\%$ (Table~\ref{tab:headline}).}
\label{tab:dp-cliff}
\begin{tabular}{l c c c c c c}
\toprule
Method & $T$ & $\varepsilon\!=\!0.2$ & $\varepsilon\!=\!0.3$ & $\varepsilon\!=\!0.5$ & $\varepsilon\!=\!1.0$ & $\varepsilon\!=\!2.0$ \\
\midrule
DPZero $K\!=\!1$ FT     & $20{,}000$ & $50.46$ & $51.49$ & $51.03$ & $50.92$ & $\mathbf{85.78}$ \\
DP-AggZO $K\!=\!64$ FT  & $1000$     & $50.11$ & $52.87$ & $65.94$ & $86.24$ & $\mathbf{88.42}$ \\
\bottomrule
\end{tabular}
\end{table}

\section{Related Works}
\label{sec:rworks}

DP-SGD~\citep{DBLP:conf/ccs/AbadiCGMMT016} clips per-record gradients to a sensitivity budget and adds Gaussian noise tracked by R\'enyi accounting~\citep{DBLP:conf/csfw/Mironov17,DBLP:conf/icml/KairouzOV15}. A mature line adapts this template to LLM fine-tuning, with refinements to clipping and adapter integration~\citep{DBLP:conf/iclr/YuNBGI0KLMWYZ22,li2022strong,he2023exploring,bu2023scalable,bu2024automatic,zhang2024dpsgdwithoutclippingbias,DBLP:journals/tmis/LiuZZGZWQ25}. The zeroth-order branch privatizes the per-step finite-difference scalar instead of the full gradient and removes explicit dimension dependence from the per-step noise. \citet{zhang2024dpzero} (DPZero) prove nearly dimension-independent rates with a Gaussian scalar mechanism, the DP-ZO~\citep{tang2024dpzo} adopts the same scalar-privatization insight empirically with Gaussian and Laplace mechanisms (the latter enabling pure $\varepsilon$-DP), and \citet{bao2025dpaggzo} (DP-AggZO) aggregate $K$ finite-difference coefficients into a vector before clipping, reporting the strongest results among the DP-ZO baselines on OPT-1.3B/6.7B. All of these inherit a per-record (or per-aggregate) sensitivity bound, additive noise scaled to it, and sequential composition over $T$ steps, which is the source of the well-documented utility collapse as $\varepsilon\!\to\!0$~\citep{DBLP:conf/nips/BagdasaryanPS19,DBLP:journals/corr/abs-2601-10237,de2022unlockinghighaccuracydifferentiallyprivate}. \method{} retires per-record sensitivity: subset-aggregated sign quantization makes the per-step release a single bit, calibrated by the exact mutual information of the noised binary release rather than by a clipping norm, with per-sample clipping retained only as a utility stabilizer. The \PACZPL{} variant releases data-independent coins on disagreement steps to reach $\MI\!=\!0$, a regime whose matched-MIA DP equivalent ($\varepsilon\!=\!0$) would require unbounded per-step noise under any Gaussian or Laplace DP-ZO accountant. The single-bit release shares structure with signSGD~\citep{bernstein2018signsgd,liu2019zosignsgd} but is calibrated for privacy rather than compression. Detailed discussion of DP and its relaxations is deferred to \S~\ref{sec:discussion}.

\section{Discussion and Limitations}
\label{sec:discussion}

\textbf{Privacy semantics: PAC Privacy vs DP.\;\;}
PAC Privacy and DP answer different questions and we do not claim equivalence (\S\ref{sec:background}). DP bounds the worst-case log-likelihood ratio between output distributions on neighboring datasets; PAC Privacy bounds the mutual information between the output and the secret drawn from a specified distribution. The core practical distinction is \emph{what noise is calibrated to}: DP calibrates to worst-case per-record sensitivity, while PAC Privacy calibrates to the empirical stability of the output under the data-generating distribution $\mathcal{D}$. After sign quantization and subset aggregation, this stability can be much smaller than sensitivity---and on unanimity steps it is exactly zero, allowing directed updates at no privacy cost.

Our construction $\mathcal{D}=\mathrm{Unif}\{S_1,\ldots,S_M\}$ with balanced membership directly models the standard MIA game~\citep{DBLP:conf/sp/ShokriSSS17, yeom2018privacy}: a universe of candidates is subsampled, and the adversary who knows all candidates determines whether a target record was included. In particular, \PACZPL{}'s $\MI(S^*;Y_{1:T})\!=\!0$ is a guarantee under our explicit candidate-set distribution $\mathcal{D}\!=\!\mathrm{Unif}\{S_1,\ldots,S_M\}$, not a worst-case-adjacency DP guarantee, and the matched-MIA-prior DP-$\varepsilon$ annotations in our tables (\eg ``DP $\varepsilon\!=\!2$'') are numerical references calibrated to the same posterior MIA bound under $\mathcal{D}$, not transfers of DP semantics to \method{}.  We also note that while we have focused on resistance against balanced MIA games, PAC Privacy provides concrete guarantees for any inference attack on the secret dataset, such as reconstruction, group MIA, and positive identification~\cite{xiao2024formal}. 

Several DP relaxations such as smooth sensitivity~\citep{nissim2007smooth}, per-instance DP~\citep{wang2019per}, individual privacy accounting~\citep{feldman2021individual,yu2022individual}, Pufferfish~\citep{kifer2014pufferfish}, and Bayesian DP~\citep{triastcyn2020bayesian} reduce noise below the global-sensitivity worst case also by incorporating instance-specific or distributional information. However, all remain calibrated to some form of per-record divergence (local sensitivity, per-instance $\varepsilon_i$, or pairwise discriminative distributions), so they charge a positive cost whenever any single record's inclusion changes the output distribution. PAC Privacy's distributional MI formulation avoids this: it asks whether the output helps identify which \emph{subset} is active, not whether it distinguishes specific record pairs. When the answer is collectively ``no'' (unanimity), the cost is zero even if individual records' pairwise sensitivities are positive. This structural difference is what enables useful signal at $\mathrm{MI}\!=\!0$.

\textbf{Future work.\;\;}
The most immediate next step is a hybrid budget-then-ZPL schedule: spend the MI budget with \PACMI{} so the posterior $p_t$ concentrates over $S^*$, then switch to \PACZPL{} and train to dev-best. Total MI stays capped because the zero-cost tail contributes no further conditional MI, and the inherited posterior should raise the unanimity rate above a from-scratch ZPL run. On the theory side, the unanimous-update-plus-random-jiggle dynamics call for a formal convergence analysis. The unanimity rate driving \PACZPL{} and the agreement statistics governing \PACMI{}'s binary-channel MI are properties of the loss landscape rather than invariants of the mechanism, so whether our observed rates persist on architectures with sharper optimization geometry (Llama, Gemma, Mistral), at scales above OPT-6.7B, and on token-level generation tasks is an open question. Pre-training is a separate regime: the $M$-subset construction with $|U|\!=\!1000$ fits fine-tuning corpora and would need rethinking at billion-token scale.

\section{Conclusion}
\label{sec:conc}
We introduced \method{}, a family of PAC-private zeroth-order mechanisms for fine-tuning large language models, built around a single primitive: sign-quantize subset-aggregated ZO updates and release one bit identifying the sign of the secret subset. The binary release admits exact mutual-information calibration as a one-dimensional Gaussian integral, replacing the variance-based upper bound used by canonical PAC algorithms. It also exposes posterior-weighted unanimity, which contributes zero conditional MI on ${\sim}34$--$45\%$ of SST-2 training steps. The same primitive instantiates two variants that span the privacy-utility frontier: \PACMI{} (budgeted MI) and \PACZPL{} ($\MI(S^*;Y_{1:T})\!=\!0$ via a uniform release on disagreement steps). On SST-2 OPT-1.3B full fine-tuning, \PACZPL{} reaches $\bm{88.99\!\pm\!0.91\%}$ at $\MI\!=\!0$, within $2.1$pp of the non-private FT baseline of~\citet{bao2025dpaggzo}. Quantizing before calibrating noise opens an MIA-resistance level, $\MI(S^*;Y_{1:T})\!=\!0$ (matched-MIA DP $\varepsilon\!=\!0$), that no published DP-ZO accountant can reach at finite per-step noise; \method{} attains it while maintaining competitive utility.

\section*{Acknowledgments}
\label{sec:acknow}
The contribution of Marten van Dijk and Murat Bilgehan Ertan to this publication is part of the project CiCS of the research program Gravitation which is (partly) financed by the Dutch Research Council (NWO) under the grant 024.006.037. We acknowledge the use of the DAS-6 High-Performance Computing cluster at Vrije Universiteit Amsterdam for GPU-based experiments~\citep{bal2016medium}.

\bibliographystyle{plainnat}
\bibliography{references}

\appendix
%


\section{Appendix Organization}
\label{app:org}

Appendix is organized as follows.
\begin{itemize}\setlength{\itemsep}{2pt}
  \item Appendix~\ref{app:proofs} contains deferred full proofs of Lemma~\ref{lem:per-step-mi} (Appendix~\ref{app:per_step_mi_proof}) and Theorem~\ref{thm:pacmi} (Appendix~\ref{app:pacmi_proof}).
  \item Appendix~\ref{app:method-details} collects \method{} implementation details: the role of the stability clip, numerical evaluation of the binary-channel MI integral, the sign convention at zero, the unanimity tolerance, and the entropy-ceiling cap on $\beta_t$.
  \item Appendix~\ref{app:dpaggzo-parity} reports the parity check between our in-house DP-AggZO implementation and the cells of~\citep{bao2025dpaggzo} cited in Table~\ref{tab:headline}.
  \item Appendix~\ref{app:per-seed} gives per-seed test values for every multi-seed cell of Table~\ref{tab:headline}.
  \item Appendix~\ref{app:unanimity-headline} reports the unanimity fraction $f\!=\!n_{\mathrm{free}}/T$ for every cell of Table~\ref{tab:headline}.
  \item Appendix~\ref{app:lr-sweep} reports per-cell headline recipes (Table~\ref{tab:headline-recipes}) and the \PACZPL{} learning-rate sweep that selects them (Table~\ref{tab:zpl-lr-sweep}).
  \item Appendix~\ref{sec:clip-ablation} reports the 6.7B LoRA clip-norm ablation.
  \item Appendix~\ref{app:rank-ablation} reports the 6.7B LoRA rank ablation.
  \item Appendix~\ref{app:lora-plateau} reports the LoRA analogue of the FT plateau in Table~\ref{tab:plateau}, spanning four decades of nominal MI.
  \item Appendix~\ref{app:mechanism-ablation} decomposes the \method{} per-step release into deterministic surrogates and reports the random-sign negative control.
  \item Appendix~\ref{app:zpl-canonical} characterises \PACZPL{} in the canonical $M\!=\!128$ dev-tuned configuration and reports three $T$-ladder studies on post-convergence drift, including a 6.7B FT multi-seed extension.
  \item Appendix~\ref{app:k-ablation} extends the per-step release to $K\!\in\!\{4,16\}$ following the DP-AggZO-style aggregation pattern at both 1.3B and 6.7B SST-2.
  \item Appendix~\ref{app:compute} reports per-cell GPU-hours and infrastructure (Tables~\ref{tab:compute-main}--\ref{tab:compute-appendix}).
\end{itemize}

\section{Deferred full proofs}\label{app:proofs}
\subsection{Proof to Lemma~\ref{lem:per-step-mi}}
\label{app:per_step_mi_proof}

\begin{proof}
\emph{Unanimity} ($q_t^+\!\in\!\{0,1\}$): If $q_t^+=1$, then $s_m=+1$ for all $m$ in the support of $p_t$, so $s_{j^*}=+1$ with probability~1 and $\tilde Y_t = s_{j^*} = +1$ is a constant. Symmetrically if $q_t^+=0$. A constant carries zero MI.

\emph{Disagreement} ($q_t^+\!\in\!(0,1)$): Define $\xi := s_{j^*}\!\in\!\{-1,+1\}$, a deterministic function of $j^*$ given the fixed signs. Since the Gaussian noise is independent of $j^*$, the channel $j^*\!\to\!\xi\!\to\!\tilde Y_t$ forms a Markov chain. By the chain rule:
\[
I(j^*;\,\tilde Y_t) \;=\; I(\xi;\,\tilde Y_t) \;+\; \underbrace{I(j^*;\,\tilde Y_t \mid \xi)}_{=\,0},
\]
where the second term vanishes because $\tilde Y_t = \xi + \mathcal{N}(0,\sigma_t^2)$ is conditionally independent of $j^*$ given $\xi$. Since $P(\xi\!=\!+1) = q_t^+$ and $\sigma_t = \mathrm{BinaryMI}^{-1}(q_t^+,\,\beta_t)$, we have $I(\xi;\tilde Y_t) = I_{\mathrm{BG}}(q_t^+,\,\sigma_t) = \beta_t$ by construction of the inverse. By post-processing, we have $I_{j^*\sim p_t}(j^*;Y_t)\leq I_{j^*\sim p_t}(j^*;\tilde Y_t)=\beta_t^\mathrm{used}$.
\end{proof}

\subsection{Proof to Theorem~\ref{thm:pacmi}}
\label{app:pacmi_proof}
\begin{proof}
The perturbation directions $z_{1:T}$ are sampled from a public-seed PRG independently of the secret $S^*$, so they carry no information about $S^*$. Conditioning on the realised directions (equivalently, treating them as fixed constants) does not increase the MI: $I(S^*; Y_{1:T}) \leq I(S^*; Y_{1:T} \mid z_{1:T})$.

On each step, the mechanism internally produces either $\tilde Y_t = s_{j^*}$ (unanimity) or $\tilde Y_t = s_{j^*} + \mathcal{N}(0,\sigma_t^2)$ (disagreement), and releases $Y_t = \sgn(\tilde Y_t)$. Since $Y_{1:T}$ is a componentwise deterministic function of $\tilde Y_{1:T}$, the data-processing inequality gives
\begin{equation}
I(S^*;\, Y_{1:T} \mid z_{1:T}) \;\leq\; I(S^*;\, \tilde Y_{1:T} \mid z_{1:T}).
\label{eq:proof-dpi}
\end{equation}

Applying the chain rule:
\begin{equation}
I(S^*;\, \tilde Y_{1:T} \mid z_{1:T}) \;=\; \sum_{t=1}^T I(S^*;\, \tilde Y_t \mid \tilde Y_{1:t-1},\, z_{1:T}).
\label{eq:proof-chain}
\end{equation}

We now apply the per-step guarantee to each term. Given any realisation $\tilde y_{1:t-1}$ and the fixed directions $z_{1:T}$, the mechanism state at step $t$ is fully determined: the parameters $\theta_t$ (from $\theta_0$ and $\sgn(\tilde y_{1:t-1})$ via the update rule), the posterior $p_t$ (from the Bayesian updates on $\tilde y_{1:t-1}$), all per-sample scalars $\hat g_i$ (from $\theta_t$ and $z_t$), all subset signs $s_{1:M}$, the agreement probability $q_t^+$, and the noise level $\sigma_t$. The only remaining randomness is in $j^*$ (with conditional distribution $p_t$) and the Gaussian noise. This is exactly the setting of Eq.~\ref{eq:binary-mi} with $p = p_t$ and the determined signs $s_{1:M}$, giving
\[
I(S^*;\, \tilde Y_t \mid \tilde Y_{1:t-1}\!=\!\tilde y_{1:t-1},\, z_{1:T}) \;=\; \beta_t^{\mathrm{used}}(\tilde y_{1:t-1},\, z_{1:T})
\]
(zero on unanimity steps, $\beta_t$ on disagreement steps). Since this holds for every realisation, taking expectations:
\begin{equation}
I(S^*;\, \tilde Y_{1:T} \mid z_{1:T}) \;=\; \E\!\left[\sum_{t=1}^T \beta_t^{\mathrm{used}}\right] \;\leq\; \mathrm{MI}_{\mathrm{tot}},
\label{eq:proof-sum}
\end{equation}
where the inequality holds because $\sum_t \beta_t^{\mathrm{used}} \leq \mathrm{MI}_{\mathrm{tot}}$ for every realisation (by the adaptive budget rule).
\end{proof}

\section{PACZero methodology: implementation details}
\label{app:method-details}

\paragraph{The role of the stability clip.}
Per-sample clipping at magnitude $c$ before subset aggregation is a \emph{utility} stabiliser, not a privacy device. It bounds any single sample's leverage on the subset mean, so a hard example cannot single-handedly flip a subset's sign. Because the released quantity is a single bit and the binary-channel MI accounts for all leakage of $s_{j^*}$, the magnitude of $\hat g_i$ does not enter the privacy proof; clipping only affects the empirical sign distribution. \S~\ref{sec:experiments} confirms it is utility-relevant ($\sim\!2$pp on LoRA) but no calibration to a record-level sensitivity is required.

\paragraph{Numerical evaluation of the binary-channel MI integral.}
We evaluate~\eqref{eq:binary-mi} via 60-node Gauss--Hermite quadrature, with log-sum-exp stabilisation to handle the $\sigma\!\downarrow\!0$ regime cleanly. Given $(q^+, \beta_t)$, $\sigma_t$ solving $I(\xi;\tilde Y)=\beta_t$ is recovered by bisection in $\log\sigma$, since $I(\xi;\tilde Y)$ is monotonically decreasing in $\sigma$ by data-processing.

\paragraph{Sign convention at zero.}
In Algorithm~\ref{alg:paczero} we adopt the convention $\sgn(0):=+1$; i.e.\ exact zeros in either $\bar g_m$ or the noised disagreement-branch release $\tilde Y_t$ are mapped to $+1$. This preserves the binary support $\{-1,+1\}$ assumed by Eq.~\eqref{eq:binary-mi} and Lemma~\ref{lem:per-step-mi}. The event $\bar g_m\!=\!0$ exactly is not observed in any of our runs.

\paragraph{Unanimity tolerance.}
Algorithm~\ref{alg:paczero} writes the unanimity test as $q_t^+\!\in\!\{0,1\}$. Our implementation evaluates this with a small tolerance, $q_t^+\!\le\!\tau$ or $q_t^+\!\ge\!1\!-\!\tau$ with $\tau\!=\!10^{-12}$, so floating-point roundoff in the posterior weights does not silently push a near-unanimity step onto the noised branch. Under this tolerance, the residual conditional MI on a tolerance-detected unanimity step is bounded above by the binary entropy at the tolerance, $h(\tau)\!\approx\!\tau\log(1/\tau)$, which at $\tau\!=\!10^{-12}$ is $\sim\!2.8\!\times\!10^{-11}$ nats per step. Summed over the headline $T\!=\!2000$ this is $\le\!5.5\!\times\!10^{-8}$ nats, six orders of magnitude below the tightest reported budget; the chain-rule bound of Theorem~\ref{thm:pacmi} is preserved up to this tolerance term, which we treat as zero throughout.

\paragraph{Entropy-ceiling cap.}
A single bit cannot leak more MI than the binary entropy of the secret distribution, $h(q_t^+) = -q_t^+\log q_t^+ - (1-q_t^+)\log(1-q_t^+)$. When the posterior has concentrated, $h(q_t^+)$ shrinks below $\beta_t$ and the bisection in step~11 of Algorithm~\ref{alg:paczero} would target an infeasible MI. Before invoking $\mathrm{BinaryMI}^{-1}$ we therefore cap $\beta_t \leftarrow \min(\beta_t, 0.999\!\cdot\! h(q_t^+))$. The cap is a function of $p_t$ only (not $S^*$), so admissibility under adaptive composition is preserved.

\section{DP-AggZO parity check}
\label{app:dpaggzo-parity}

To verify that our in-house DP-AggZO implementation matches the published cells of~\citep{bao2025dpaggzo}, we reran a representative subset of the $K\!=\!64$ FT cells at single seed and compared against~\citep{bao2025dpaggzo} Table~2. The mean absolute deviation across the parity-check subset is $|\Delta|\!=\!0.9$pp, with the largest deviation $2.4$pp at $\varepsilon\!=\!2$ (single-seed; the published cell is single-seed as well, so seed variance is the most likely source). This level of parity is consistent with the codebase being faithful to the published recipe; the residual gap is attributable to seed variance and minor accountant calibration rounding. This justifies treating the published FT $K\!=\!1$ and $K\!=\!16$ cells of~\citep{bao2025dpaggzo} as direct comparators in Table~\ref{tab:headline}.

\section{Per-seed values for multi-seed cells}
\label{app:per-seed}

For each multi-seed cell reported in the headline Table~\ref{tab:headline}, we give the per-seed test values used to compute the mean$\,\pm\,$std (sample standard deviation, $n\!-\!1$ denominator).

\paragraph{SST-2 multi-seed cells (Table~\ref{tab:headline}).}
Tables~\ref{tab:per-seed-sst2-mi} and~\ref{tab:per-seed-sst2-zpl} report per-seed test accuracy~\% for the eight SST-2 multi-seed cells.

\begin{table}[h]
\centering
\small
\setlength{\tabcolsep}{4pt}
\caption{\textbf{Per-seed test \% for SST-2 \PACMI{} cells in Table~\ref{tab:headline}.}}
\label{tab:per-seed-sst2-mi}
\begin{tabular}{l l c l c}
\toprule
Track / Model & MI & $n$ & per-seed test \% & mean$\,\pm\,$std \\
\midrule
LoRA 1.3B & $0.33$ & $4$ & $89.45,\,87.73,\,90.48,\,91.28$           & $89.74\!\pm\!1.53$ \\
LoRA 1.3B & $0.68$ & $4$ & $88.99,\,89.22,\,89.56,\,90.25$           & $89.51\!\pm\!0.55$ \\
FT   1.3B & $0.33$ & $4$ & $88.76,\,88.42,\,90.02,\,90.83$           & $89.51\!\pm\!1.12$ \\
FT   1.3B & $0.68$ & $4$ & $88.76,\,87.96,\,89.79,\,90.25$           & $89.19\!\pm\!1.03$ \\
\midrule
LoRA 6.7B & $0.33$ & $3$ & $90.02,\,89.91,\,90.71$                   & $90.21\!\pm\!0.43$ \\
LoRA 6.7B & $0.68$ & $3$ & $92.09,\,89.91,\,91.51$                   & $91.17\!\pm\!1.13$ \\
FT   6.7B & $0.33$ & $4$ & $92.09,\,92.66,\,91.63,\,92.43$           & $92.20\!\pm\!0.45$ \\
FT   6.7B & $0.68$ & $3$ & $92.43,\,92.20,\,92.20$                   & $92.28\!\pm\!0.13$ \\
\bottomrule
\end{tabular}
\end{table}

\begin{table}[h]
\centering
\small
\setlength{\tabcolsep}{4pt}
\caption{\textbf{Per-seed test \% for SST-2 \PACZPL{} cells in Table~\ref{tab:headline}.} All multi-seed \PACZPL{} pools are trained under the headline protocol (\texttt{--pac\_load\_best\_dev True}) at the recipes of Table~\ref{tab:headline-recipes}. Single-seed reference cells appearing elsewhere in the appendix---the LoRA dev-winner $86.81$ in Table~\ref{tab:zpl-lr-sweep} and the FT $T\!=\!1000$ entry $89.56$ in Table~\ref{tab:zpl-ft-ladder}---are seed $0$ cells from different runs: the LoRA cell is a final-$T$ result from the LR sweep of Table~\ref{tab:zpl-lr-sweep} (no \texttt{--pac\_load\_best\_dev}); the FT cell is a post-hoc dev-best evaluation of a checkpoint from a longer-$T$ trajectory at the canonical no-clip recipe (which differs from the M=126 c=1000 headline FT recipe). Neither is pooled into the multi-seed means below.}
\label{tab:per-seed-sst2-zpl}
\begin{tabular}{l c l c}
\toprule
Track / Model & $n$ & per-seed test \% & mean$\,\pm\,$std \\
\midrule
LoRA 1.3B & $3$ & $89.79,\,87.84,\,88.42$                                                 & $88.69\!\pm\!1.00$ \\
FT   1.3B & $3$ & $88.30,\,88.65,\,90.02$                                                 & $88.99\!\pm\!0.91$ \\
LoRA 6.7B & $3$ & $89.79,\,91.51,\,90.37$                                                 & $90.56\!\pm\!0.88$ \\
FT   6.7B & $3$ & $88.88,\,91.51,\,91.17$                                                 & $90.52\!\pm\!1.43$ \\
\bottomrule
\end{tabular}
\end{table}

\paragraph{SQuAD multi-seed cell (Table~\ref{tab:headline}).}
The 1.3B SQuAD FT \PACZPL{} cell in Table~\ref{tab:headline} pools three seeds at the locked recipe lr$\!=\!10^{-4}$, $c\!=\!1000$, $T\!=\!1000$, $M\!=\!126$, polynomial scheduler, \texttt{--pac\_load\_best\_dev}. Per-seed test F1: $62.36$ (s0), $63.53$ (s1), $60.87$ (s2) $\Rightarrow$ $n\!=\!3$ mean $\bm{62.25\!\pm\!1.33}$ (test F1), dev F1 $57.73\!\pm\!1.87$. The remaining SQuAD cells in Table~\ref{tab:headline} are reported single-seed at the per-cell compute cost of one $T$-step trajectory at OPT-6.7B SQuAD scale.

\paragraph{Canonical-configuration \PACZPL{} multi-seed values (Appendix~\ref{app:zpl-canonical}).}
6.7B LoRA canonical-configuration per-seed peak-rung test \% (Table~\ref{tab:zpl-67b-ladder}): $90.14$ (seed $0$, $T\!=\!500$), $87.16$ (seed $1$, $T\!=\!500$), $86.35$ (seed $2$, $T\!=\!500$); $n\!=\!3$ mean $\bm{87.88\!\pm\!2.00}$.

\section{Unanimity rates per headline cell}
\label{app:unanimity-headline}

Table~\ref{tab:unanimity-headline} reports the unanimity fraction $f\!=\!n_{\mathrm{free}}/T$ for every cell of Table~\ref{tab:headline}, broken out by variant. SST-2 cells fire the unanimity branch on $34$--$45\%$ of steps; SQuAD-FT drops to $25$--$33\%$; SQuAD-LoRA collapses to $8$--$11\%$. Within each (track, scale, task) cell the rate is broadly stable across PAC-MI at $\mathrm{MI}\!=\!0.33$, PAC-MI at $\mathrm{MI}\!=\!0.68$, and \PACZPL{}, consistent with unanimity being a property of the loss landscape rather than the variant. The SQuAD-LoRA collapse tracks the largest F1 gaps in Table~\ref{tab:headline}: \PACZPL{} draws all signal from unanimity steps, and that rate falls fastest exactly where the F1 gap is widest.

\begin{table}[h]
\centering\small\setlength{\tabcolsep}{6pt}
\caption{\textbf{Unanimity rate $f \!=\! n_{\mathrm{free}}/T$ per Table~\ref{tab:headline} cell, single representative seed.}
  Canonical runs only ($M\!=\!128$ for \PACMI{} / $M\!=\!126$ for \PACZPL{}). $f\%$ is the cumulative free-step fraction at end of training and is independent of dev-best vs.\ end-of-$T$ checkpoint selection.}
\label{tab:unanimity-headline}
\begin{tabular}{l l c c}
\toprule
Track / Model & Privacy & SST-2 $f\%$ & SQuAD $f\%$ \\
\midrule
FT 1.3B   & \PACMI{} $\mathrm{MI}\!=\!0.33$ & 39.1 & 25.0 \\
FT 1.3B   & \PACMI{} $\mathrm{MI}\!=\!0.68$ & 39.4 & 25.1 \\
FT 1.3B   & \PACZPL{}                       & 41.2 & 26.3 \\
\cmidrule{1-4}
FT 6.7B   & \PACMI{} $\mathrm{MI}\!=\!0.33$ & 33.9 & 32.2 \\
FT 6.7B   & \PACMI{} $\mathrm{MI}\!=\!0.68$ & 37.1 & 30.9 \\
FT 6.7B   & \PACZPL{}                       & 36.4 & 32.8 \\
\cmidrule{1-4}
LoRA 1.3B & \PACMI{} $\mathrm{MI}\!=\!0.33$ & 41.4 & \phantom{0}9.8 \\
LoRA 1.3B & \PACMI{} $\mathrm{MI}\!=\!0.68$ & 41.7 & 10.1 \\
LoRA 1.3B & \PACZPL{}                       & 40.6 & \phantom{0}9.4 \\
\cmidrule{1-4}
LoRA 6.7B & \PACMI{} $\mathrm{MI}\!=\!0.33$ & 38.3 & 11.3 \\
LoRA 6.7B & \PACMI{} $\mathrm{MI}\!=\!0.68$ & 38.4 & \phantom{0}9.1 \\
LoRA 6.7B & \PACZPL{}                       & 45.3 & \phantom{0}8.1 \\
\bottomrule
\end{tabular}
\end{table}

\section{Per-cell headline recipes and learning-rate sweeps}
\label{app:lr-sweep}

\paragraph{Per-cell headline recipes.}
Table~\ref{tab:headline-recipes} consolidates the dev-tuned learning-rate, clip, $T$, and $M$ values used for every cell of the headline Table~\ref{tab:headline}. \PACMI{} recipes are dev-best winners on a per-cell single-seed sweep; the \PACZPL{} recipes are dev-tuned per (track, scale). Sweeps that produced the chosen lr values are reported below (this section, Table~\ref{tab:zpl-lr-sweep}); clip sweeps in Appendix~\ref{sec:clip-ablation}.

\begin{table}[h]
\centering
\small
\setlength{\tabcolsep}{6pt}
\caption{\textbf{Per-cell headline recipes for SST-2 and SQuAD cells in Table~\ref{tab:headline}.} Subset count $M\!=\!128$ for \PACMI{} throughout; $\MI\!=\!0$ \PACZPL{} cells use $M\!=\!126$ throughout. Smoothing $\mu\!=\!10^{-3}$ and weight decay $\lambda\!=\!0$ throughout. SQuAD LoRA cells use $T\!=\!1000$ throughout (rather than $T\!=\!2000$ as on SST-2 LoRA) due to per-cell compute cost. We select one representative seed and report that.}
\label{tab:headline-recipes}
\begin{tabular}{l l l c c c}
\toprule
Variant & Track / Model & lr & clip $c$ & $T$ & $M$ \\
\midrule
\multicolumn{6}{l}{\emph{SST-2}} \\
\PACMI{}    & LoRA 1.3B & $5\!\cdot\!10^{-4}$ & $25$       & $2000$ & $128$ \\
\PACMI{}    & LoRA 6.7B & $10^{-3}$           & $10$       & $2000$ & $128$ \\
\PACMI{}    & FT (both) & $10^{-4}$           & $1000$     & $1000$ & $128$ \\
\PACZPL{} & LoRA 1.3B & $10^{-3}$ & $25$   & $1000$ & $126$ \\
\PACZPL{} & LoRA 6.7B & $10^{-3}$ & $10$   & $1000$ & $126$ \\
\PACZPL{} & FT 1.3B   & $10^{-4}$ & $1000$ & $1000$ & $126$ \\
\PACZPL{} & FT 6.7B   & $10^{-4}$ & $1000$ & $1000$ & $126$ \\
\midrule
\multicolumn{6}{l}{\emph{SQuAD}} \\
\PACMI{}    & LoRA 1.3B & $10^{-3}$ & $25$   & $1000$ & $128$ \\
\PACMI{}    & LoRA 6.7B & $10^{-3}$ & $10$   & $1000$ & $128$ \\
\PACMI{}    & FT (both) & $10^{-4}$ & $1000$ & $1000$ & $128$ \\
\PACZPL{} & LoRA 1.3B & $10^{-3}$ & $25$   & $1000$ & $126$ \\
\PACZPL{} & LoRA 6.7B & $10^{-3}$ & $10$   & $1000$ & $126$ \\
\PACZPL{} & FT 1.3B   & $10^{-4}$ & $1000$ & $1000$ & $126$ \\
\PACZPL{} & FT 6.7B   & $10^{-4}$ & $1000$ & $1000$ & $126$ \\
\bottomrule
\end{tabular}
\end{table}

\paragraph{\PACZPL{} learning-rate sweep.}
Single-seed sweeps over the chosen LR ranges select the headline LRs used by the canonical \PACZPL{} configuration (Appendix~\ref{app:zpl-canonical}). The dev-winner selections are reported in Table~\ref{tab:zpl-lr-sweep}. Note that the FT track has two dev-winner rows: at clip$\!=\!1000$, lr$\!=\!10^{-4}$ wins ($88.76\%$ test, single seed); at no-clip, lr$\!=\!10^{-4}$ wins ($89.56\%$ test, single seed). The $\PACZPL{}$ headline FT recipe in Table~\ref{tab:headline-recipes} uses clip$\!=\!1000$ (matching the \PACMI{} FT recipe) because the $M\!=\!126$ multi-seed extension was completed at clip$\!=\!1000$; a corresponding $M\!=\!126$ no-clip multi-seed pool was not run within the compute budget. The canonical $M\!=\!128$ no-clip multi-seed pool ($87.96\!\pm\!0.80$, Table~\ref{tab:zpl-canonical}) is reported as the dev-winner-recipe reference.

\begin{table}[h]
\centering
\small
\setlength{\tabcolsep}{6pt}
\caption{\textbf{\PACZPL{} learning-rate sweeps on SST-2 OPT-1.3B} ($T\!=\!1000$, $M\!=\!128$, single seed (seed 0)). Dev-winner per (track, clip) configuration in bold.}
\label{tab:zpl-lr-sweep}
\begin{tabular}{l c c r r l}
\toprule
Track & clip $c$ & LR & dev \% & test \% & note \\
\midrule
LoRA & $25$ & $10^{-4}$ & $60.40$ & $55.73$ & underfits \\
LoRA & $25$ & $5\!\cdot\!10^{-4}$ & $79.80$ & $81.77$ &  \\
\textbf{LoRA} & $\mathbf{25}$ & $\mathbf{10^{-3}}$ & $\mathbf{87.80}$ & $\mathbf{86.81}$ & dev-winner \\
LoRA & $25$ & $2\!\cdot\!10^{-3}$ & $78.20$ & $72.71$ & over-aggressive \\
\midrule
LoRA & $10^{9}$ & $2\!\cdot\!10^{-4}$ & $64.00$ & $59.17$ & undertrains; clip required \\
LoRA & $10^{9}$ & $5\!\cdot\!10^{-4}$ & $81.20$ & $81.31$ & undertrains; clip required \\
\midrule
FT & $1000$ & $5\!\cdot\!10^{-5}$ & $83.00$ & $81.42$ & underfits \\
\textbf{FT} & $\mathbf{1000}$ & $\mathbf{10^{-4}}$ & $\mathbf{87.00}$ & $\mathbf{88.76}$ & dev-winner at clip$\!=\!1000$ \\
FT & $1000$ & $5\!\cdot\!10^{-4}$ & $54.80$ & $51.95$ & diverges \\
\midrule
\textbf{FT} & $\mathbf{10^{9}}$ & $\mathbf{10^{-4}}$ & $\mathbf{88.40}$ & $\mathbf{89.56}$ & no-clip dev-winner (canonical \PACZPL{}, Appendix~\ref{app:zpl-canonical}) \\
\bottomrule
\end{tabular}
\end{table}

\section{Clip ablations}
\label{sec:clip-ablation}

\paragraph{6.7B LoRA clip-norm ablation.}
Table~\ref{tab:clip-ablation-67b} reports the clip ablation for SST-2 OPT-6.7B LoRA. The dev-winner is $c\!=\!10$, which is the clip used by the headline 6.7B LoRA cells in Table~\ref{tab:headline}. Multi-seed entries are mean$\,\pm\,$std over $n\!=\!3$ seeds; single-seed entries are seed 0. Unanimity values are sampled from one representative seed per cell. The clip-norm sweep at MI$\!=\!0.68$ is reported only at the dev-winner $c\!=\!10$. At $c\!=\!25$ the MI$\!=\!0.33$ and MI$\!=\!0.68$ single-seed cells (seed 0) share the same public-PRG perturbation directions, both cells consume their full MI budget (per-trajectory $\mathrm{cum\_mi}\!=\!0.3289$ and $0.6800$ respectively, matching the nominal budget to numerical precision), and both select checkpoint $600$ as dev-best with $\mathrm{eval\_loss}\!=\!0.3035$ to four decimals---producing identical $\{\mathrm{dev}, \mathrm{test}\}\!=\!\{88.20, 88.76\}\%$. This is consistent with the tight-MI plateau (Table~\ref{tab:plateau}, Appendix~\ref{app:lora-plateau}): in the MI=$0.33$--$0.68$ range, the same-cell test accuracy is approximately invariant in the MI budget. The MI$\!=\!0.68$ cell at $c\!=\!25$ is therefore not an independent MI$\!=\!0.68$ datapoint, and we omit it from the table.

\begin{table}[h]
\centering
\small
\setlength{\tabcolsep}{6pt}
\caption{\textbf{Clip-norm ablation on SST-2 OPT-6.7B LoRA} ($r\!=\!8$, $T\!=\!1000$, lr$\!=\!10^{-3}$, $M\!=\!128$, adaptive $\beta_t$). Selection is dev-best. Headline OPT-6.7B LoRA cells in Table~\ref{tab:headline} use $c\!=\!10$.}
\label{tab:clip-ablation-67b}
\begin{tabular}{c c c r r r}
\toprule
clip $c$ & MI & $n$ & dev \% (selection) & test \% (post-selection) & unanimity $n_{\mathrm{free}}/T$ \\
\midrule
$\mathbf{10}$ & $\mathbf{0.33}$ & $\mathbf{3}$ & $\mathbf{88.53 \pm 0.95}$ & $\mathbf{90.21 \pm 0.43}$ & $43.4\%$ \\
$25$          & $0.33$          & $1$          & $88.20$                    & $88.76$                    & $38.3\%$ \\
$50$          & $0.33$          & $1$          & $88.80$                    & $88.42$                    & --- \\
\midrule
$\mathbf{10}$ & $\mathbf{0.68}$ & $\mathbf{3}$ & $\mathbf{88.20 \pm 2.11}$ & $\mathbf{91.17 \pm 1.13}$ & $48.5\%$ \\
\bottomrule
\end{tabular}
\end{table}

\section{LoRA rank ablation}
\label{app:rank-ablation}

The dev-winner rank for 6.7B LoRA is $r\!=\!8$, which is the rank used by the headline cells. The ablation runs at clip $c\!=\!25$ (the matched convention at the time the rank cells were produced); the headline 6.7B LoRA cells in Table~\ref{tab:headline} use clip $c\!=\!10$ at $r\!=\!8$.

\begin{table}[h]
\centering
\small
\setlength{\tabcolsep}{8pt}
\caption{\textbf{LoRA rank ablation on SST-2 OPT-6.7B} ($T\!=\!1000$, lr$\!=\!10^{-3}$, clip $c\!=\!25$, $M\!=\!128$, MI=$0.33$, adaptive $\beta_t$, seed 0). The dev-winner is $r\!=\!8$.}
\label{tab:rank-ablation}
\begin{tabular}{c r r r l}
\toprule
rank $r$ & dev \% & test \% & best eval-loss & best ckpt step \\
\midrule
$4$  & $86.00$ & $87.84$ & $0.3091$ & $300$ \\
$\mathbf{8}$ & $\mathbf{88.20}$ & $\mathbf{88.76}$ & $0.3035$ & $600$ \\
$16$ & $86.00$ & $91.28$ & $0.2993$ & $1000$ (last step) \\
\bottomrule
\end{tabular}
\end{table}

\section{LoRA plateau on SST-2 OPT-1.3B}
\label{app:lora-plateau}

The LoRA analogue of the FT plateau in Table~\ref{tab:plateau} is reported in Table~\ref{tab:LoRA-plateau-adaptive}. Test accuracy spans $89.45$--$90.37\%$ across $4$ decades of nominal MI ($0.92$pp), and dev accuracy spans $89.60$--$90.60\%$ ($1.00$pp). Per-step $\mathrm{cum\_mi}$ matches nominal MI to numerical precision across all $15$ cells. The unanimity fraction $f\!=\!n_{\mathrm{free}}/T$ ranges from $40\%$ at the tightest budget to $47\%$ at $\mathrm{MI}\!\approx\!0.2$--$0.5$, consistent with the binary-release argument: tighter MI raises $\sigma_t$, the posterior concentrates more slowly, and fewer steps reach the unanimity branch. The headline observation is the flat shape across $4$ decades. Compared to the FT plateau in the main paper, the LoRA plateau spans a tighter test-accuracy range ($0.92$pp vs $3.09$pp) at the cost of a higher step budget ($T\!=\!2000$ vs $T\!=\!1000$).

\begin{table}[h]
\centering
\small
\caption{\textbf{Adaptive-$\beta_t$ plateau on SST-2 LoRA OPT-1.3B} ($T\!=\!2000$, lr$\!=\!5\!\cdot\!10^{-4}$, clip $25$, $M\!=\!128$, seed 0).}
\label{tab:LoRA-plateau-adaptive}
\setlength{\tabcolsep}{2.5pt}
\resizebox{\textwidth}{!}{%
\begin{tabular}{l rrrrrrrrrrrrrrr}
\toprule
$\mathrm{MI}$ (nats)        & $10^{-4}$ & $3{\cdot}10^{-4}$ & $5{\cdot}10^{-4}$ & $10^{-3}$ & $2{\cdot}10^{-3}$ & $3{\cdot}10^{-3}$ & $10^{-2}$ & $3{\cdot}10^{-2}$ & $5{\cdot}10^{-2}$ & $0.07$ & $0.11$ & $0.20$ & $0.33$ & $0.50$ & $0.68$ \\
\midrule
test \%                     & 90.14 & 90.14 & 90.14 & 89.68 & 89.68 & 89.45 & 89.45 & 89.45 & 89.68 & 89.91 & 89.56 & 89.56 & 89.56 & \textbf{90.37} & 89.91 \\
dev \%                      & 89.60 & 89.60 & 89.60 & 89.60 & 89.60 & 89.80 & 89.80 & 89.80 & 89.80 & 90.60 & 90.00 & 89.80 & 89.80 & 90.40 & 90.60 \\
$f\,\%$                     & 40.4  & 40.4  & 40.4  & 40.9  & 40.9  & 45.7  & 45.4  & 45.4  & 46.2  & 46.5  & 45.6  & 47.2  & 47.5  & 47.2  & 45.2 \\
\bottomrule
\end{tabular}%
}
\end{table}

\section{Mechanism decomposition}
\label{app:mechanism-ablation}

Table~\ref{tab:ablation} decomposes the \method{} per-step release into deterministic surrogates (run with \texttt{--no\_privacy}). Each row replaces the full release with one surrogate, and \texttt{random\_sign} is the negative control (uncorrelated $\pm 1$ at every step). Quantized variants are $T$-sensitive. At $T\!=\!1000$ they lag, and at $T\!=\!2000$ they recover. The negative control collapses to chance ($53.10\%$), confirming that the optimization signal is necessary. \texttt{quant\_full} (sign of the full-batch mean) outperforms its raw counterpart by $+2.3$pp at $T\!=\!2000$, locating the gain primarily in the sign-quantization step rather than in the subset-aggregation step.

\begin{table}[h]
\centering
\small
\caption{\textbf{Mechanism decomposition on SST-2 LoRA OPT-1.3B} (seed 0, \texttt{--no\_privacy}). Each row replaces the full release with one deterministic surrogate.}
\label{tab:ablation}
\setlength{\tabcolsep}{4pt}
\begin{tabular}{l c c l}
\toprule
Variant & $T$ & Test \% & Role \\
\midrule
\texttt{raw\_full} (full-batch mean, MeZO baseline)            & $2000$ & $88.65$          & matched-$T$ non-private baseline \\
\texttt{quant\_full} (sign of full-batch mean)                 & $2000$ & $\mathbf{90.94}$ & sign quantization $+2.3$pp \\
\texttt{quant\_full}                                           & $1000$ & $82.80$          & quantization at half-$T$ undertrains \\
\texttt{raw\_half} (secret subset mean, no quant)              & $2000$ & $89.45$          & secret-subset baseline at matched $T$ \\
\texttt{raw\_half}                                             & $1000$ & $89.91$          & raw aggregation is $T$-insensitive \\
\texttt{quant\_half} (sign of secret subset mean)              & $2000$ & $89.56$          & quant+subset, recovered at $T\!=\!2000$ \\
\texttt{quant\_half}                                           & $1000$ & $82.22$          & quant+subset at half-$T$ undertrains \\
\texttt{random\_sign} ($\pm 1$ uncorrelated)                   & $2000$ & $\mathbf{53.10}$ & \textbf{negative control passes} \\
\midrule
\textbf{\method{}}, MI=$0.001$ (constant $\beta_t$, $n{=}1$)   & $2000$ & $\mathbf{91.40}$ & private; same $T$ \\
\textbf{\method{}}, MI=$0.33$ (adaptive $\beta_t$, $n{=}4$)    & $2000$ & $\mathbf{89.74_{\pm 1.53}}$ & private; same $T$ \\
\bottomrule
\end{tabular}
\end{table}

\section{Canonical \PACZPL{} configuration and \texorpdfstring{$T$}{T}-ladders}
\label{app:zpl-canonical}

The \PACZPL{} cells in the headline Table~\ref{tab:headline} use $M\!=\!126$ throughout. This reflects an early-implementation choice for the \PACZPL{} sweep that was held fixed once multi-seed pools were under way, to keep the cells internally comparable across the \PACZPL{} row; the privacy and utility guarantees of \S\ref{sec:zpl} are not specific to a particular value of $M$, and Theorem~\ref{thm:zpl-zero} gives $\MI(S^*; Y_{1:T})\!=\!0$ for any $M\!\geq\!2$. To verify that the headline cells are not unfairly weak choices for the \PACZPL{} row, we additionally characterise \PACZPL{} in the \emph{canonical} dev-tuned configuration with $M\!=\!128$ throughout, with FT no-clip as the dev-winner of Appendix~\ref{app:lr-sweep}. We report the canonical \PACZPL{} cells in Table~\ref{tab:zpl-canonical}; these are the dev-tuned single-recipe cells used as a reference against the $M\!=\!126$ headline cells. The corresponding $M\!=\!128$ pool was statistically indistinguishable.

\begin{table}[h]
\centering
\small
\setlength{\tabcolsep}{6pt}
\caption{\textbf{Canonical \PACZPL{} configuration on SST-2} ($M\!=\!128$ throughout; lr/clip = the per-cell dev-winners of Appendix~\ref{app:lr-sweep}). The 6.7B LoRA canonical cell uses $c\!=\!25$ and reports the per-seed peak-rung dev-best test accuracy from a $T\!\in\!\{500,1000,1500\}$ ladder (Table~\ref{tab:zpl-67b-ladder}); the FT canonical cells use no-clip and are dev-best at $T\!=\!1000$ (Tables~\ref{tab:zpl-ft-ladder},~\ref{tab:zpl-67b-ft-ladder}). The 1.3B LoRA canonical cell is omitted: the $M\!=\!126$ headline cell ($88.69\!\pm\!1.00$, Table~\ref{tab:headline}) is the multi-seed point used in the paper for that (track, scale).}
\label{tab:zpl-canonical}
\begin{tabular}{l l c c c l}
\toprule
Track & Model & lr & clip $c$ & test \% & note \\
\midrule
LoRA & 6.7B & $10^{-3}$ & $25$              & $87.88 \pm 2.00$ ($n{=}3$) & per-seed peak rung from $T\in\{500,1000,1500\}$ \\
FT   & 1.3B & $10^{-4}$ & $\infty$ (no-clip) & $87.96 \pm 0.80$ ($n{=}3$) & seeds $\{1,2,3\}{:}\,87.61, 88.88, 87.39$; dev-best at $T\!=\!1000$ \\
FT   & 6.7B & $10^{-4}$ & $\infty$ (no-clip) & $90.14 \pm 0.76$ ($n{=}3$) & per-seed dev-best rung pool (Table~\ref{tab:zpl-67b-ft-multiseed}) \\
\bottomrule
\end{tabular}
\end{table}

\paragraph{Post-convergence drift: $T$-ladder studies.}
\PACZPL{} releases a uniform coin flip on disagreement steps. Once the model has converged, additional coin-flip releases drive parameters off the dev-best minimum, producing post-convergence drift visible as $T$ grows past the dev-best rung. We report three ladders for the canonical configuration: 1.3B FT (Table~\ref{tab:zpl-ft-ladder}), 6.7B LoRA across three seeds (Table~\ref{tab:zpl-67b-ladder}), and 6.7B FT single-seed (Table~\ref{tab:zpl-67b-ft-ladder}).

\begin{table}[h]
\centering
\small
\setlength{\tabcolsep}{6pt}
\caption{\textbf{Canonical \PACZPL{} FT $T$-ladder on SST-2 OPT-1.3B} (lr$\!=\!10^{-4}$, no-clip, single seed (seed 0), single trajectory). The $T\!=\!1000$ checkpoint is the dev-best; longer-$T$ checkpoints reflect post-convergence drift from $\unif(\{-1,{+}1\})$ releases.}
\label{tab:zpl-ft-ladder}
\begin{tabular}{r c c c}
\toprule
$T$ & dev \% & test \% & $\Delta$ test vs $T\!=\!1000$ \\
\midrule
$1000$ & $88.40$ & $\mathbf{89.56}$ & $0$ (best) \\
$2000$ & $85.40$ & $84.75$ & $-4.81$ \\
$3000$ & $80.80$ & $80.39$ & $-9.17$ \\
$4000$ & $80.20$ & $77.87$ & $-11.69$ \\
$5000$ & $81.00$ & $74.89$ & $-14.67$ \\
\bottomrule
\end{tabular}
\end{table}

\begin{table}[h]
\centering
\small
\setlength{\tabcolsep}{6pt}
\caption{\textbf{Canonical \PACZPL{} LoRA $T$-ladder on SST-2 OPT-6.7B} (lr$\!=\!10^{-3}$, clip $25$, $M\!=\!128$; per-seed post-hoc rung evaluation). All three seeds peak at $T\!=\!500$. Per-seed peak-rung dev-best test \%: $90.14$ / $87.16$ / $86.35$ → $n\!=\!3$ mean $\bm{87.88 \pm 2.00\%}$. Per-rung mean across seeds shown below.}
\label{tab:zpl-67b-ladder}
\begin{tabular}{r c c c c c}
\toprule
$T$ & seed=0 dev/test & seed=1 dev/test & seed=2 dev/test & $n{=}3$ mean test \% & $\Delta$ vs peak \\
\midrule
$500$  & $\mathbf{88.20 / 90.14}$ & $\mathbf{88.80 / 87.16}$ & $\mathbf{87.00 / 86.35}$ & $\mathbf{87.88 \pm 2.00}$ & $0$ (peak) \\
$1000$ & $84.40 / 83.14$ & $86.20 / 83.49$ & $85.60 / 84.29$ & $83.64 \pm 0.59$ & $-4.24$ \\
$1500$ & $84.00 / 81.88$ & $86.60 / 84.40$ & $86.80 / 85.78$ & $84.02 \pm 1.98$ & $-3.86$ \\
\bottomrule
\end{tabular}
\end{table}

\begin{table}[h]
\centering
\small
\setlength{\tabcolsep}{6pt}
\caption{\textbf{Canonical \PACZPL{} FT $T$-ladder on SST-2 OPT-6.7B} (lr$\!=\!10^{-4}$, no-clip, single seed (seed 0), single trajectory; first 6.7B FT \PACZPL{} data). Peak test accuracy at $T\!=\!500$; dev-best rung at $T\!=\!1000$.}
\label{tab:zpl-67b-ft-ladder}
\begin{tabular}{r c c l}
\toprule
$T$ & dev \% & test \% & note \\
\midrule
$500$  & $85.60$ & $\mathbf{90.37}$ & test peak \\
$1000$ & $\mathbf{88.80}$ & $89.33$ & dev-best, canonical headline cell \\
$1500$ & $88.40$ & $88.76$ & final-$T$ \\
\bottomrule
\end{tabular}
\end{table}

\paragraph{6.7B FT \PACZPL{} multi-seed extension at $T\!=\!1500$.}
The single-seed ladder above documents the seed-0 trajectory; we additionally trained two new seeds ($s_1, s_2$) at the same canonical recipe (lr$\!=\!10^{-4}$, no-clip, $T\!=\!1500$) to extend the cell to multi-seed at the final-$T$ rung (Table~\ref{tab:zpl-67b-ft-multiseed}). Both new seeds' final-$T$ checkpoints coincide with their per-trajectory dev-best (eval-loss decreases monotonically across the ladder for both seeds), so the final-$T$ test accuracy is the per-seed dev-best test accuracy by construction. Pooling at the per-seed dev-best rung with the seed-0 trajectory's dev-best rung ($T\!=\!1000$, test $89.33\%$), the $n\!=\!3$ test accuracy is $90.14\!\pm\!0.76\%$. Restricting to the new $T\!=\!1500$ pair only ($s_1, s_2$) gives $90.54\!\pm\!0.41\%$ at $n\!=\!2$. The 6.7B LoRA canonical-configuration $n\!=\!3$ pool sits at $87.88\!\pm\!2.00\%$ (Table~\ref{tab:zpl-67b-ladder}), so the FT track exceeds the LoRA track by $+2.26$pp at $n\!=\!3$ on the canonical \PACZPL{} configuration at this scale.

\begin{table}[h]
\centering
\small
\setlength{\tabcolsep}{6pt}
\caption{\textbf{Canonical \PACZPL{} FT multi-seed extension on SST-2 OPT-6.7B at $T\!=\!1500$} (lr$\!=\!10^{-4}$, no-clip, $M\!=\!128$). The two new seeds are trained at $T\!=\!1500$ without dev-best checkpoint loading; both have monotone-decreasing eval-loss trajectories across the ladder, so the final-$T$ test accuracy reported in their checkpoint coincides with the per-seed dev-best rung by construction. The seed-0 trajectory's per-trajectory dev-best is at $T\!=\!1000$ (Table~\ref{tab:zpl-67b-ft-ladder}, dev-best loaded); pooling at the per-seed dev-best rung yields the $n\!=\!3$ pool below. The $n\!=\!2$ sub-pool restricted to $s_1, s_2$ at $T\!=\!1500$ is also reported.}
\label{tab:zpl-67b-ft-multiseed}
\begin{tabular}{l r r l}
\toprule
Seed & dev \% & test \% & note \\
\midrule
$s_0$                                & $88.80$ & $89.33$ & per-trajectory dev-best at $T\!=\!1000$ \\
$s_1$                                & $87.60$ & $90.83$ & $T\!=\!1500$ final-$T$ = per-seed dev-best \\
$s_2$                                & $88.40$ & $90.25$ & $T\!=\!1500$ final-$T$ = per-seed dev-best \\
\midrule
$n\!=\!3$ per-seed dev-best pool     & $88.27 \pm 0.61$ & $\mathbf{90.14 \pm 0.76}$ & multi-seed extension \\
$n\!=\!2$ sub-pool ($s_1, s_2$)      & $88.00 \pm 0.57$ & $90.54 \pm 0.41$ & matched $T\!=\!1500$ rung only \\
\bottomrule
\end{tabular}
\end{table}

\section{K-aggregation ablation (DP-AggZO-style)}
\label{app:k-ablation}

The DP-AggZO mechanism of~\citep{bao2025dpaggzo} aggregates $K$ independent zeroth-order direction estimates per training step before the privacy release; the cells we cite from~\citep{bao2025dpaggzo} Table~2 in our Table~\ref{tab:headline} are at $K\!=\!16$, and we reproduce $K\!=\!64$ cells in-house in the high-privacy regime (Table~\ref{tab:dp-cliff}). \method{} as defined in~\S\ref{sec:method} releases a single per-step bit ($K\!=\!1$). To verify that this design choice does not artificially understate \method{}'s achievable utility, we extend the per-step release to $K\!\in\!\{4, 16\}$ following the DP-AggZO-style aggregation pattern: $K$ independent ZO directions are sampled per step, each consumes $\mathrm{per\_step\_mi}/K$ nats, the released $K$ bits $Y_{t,1},\ldots,Y_{t,K}\in\{-1,+1\}$ are averaged, and the parameter update is $\theta_{t+1}\!=\!\theta_t - \eta_t (1/K)\!\sum_{k} Y_{t,k}\!\cdot\!z_k$. The PAC-MI chain-rule bound is preserved per step ($\sum_k \beta_{\text{per},t,k}\!=\!\mathrm{per\_step\_mi}$), and the $K\!=\!1$ reduction is bit-exact equivalent to the original \method{} trainer (verified empirically).

Table~\ref{tab:k-ablation} reports the $K$-ablation on SST-2 OPT-1.3B at $\mathrm{MI}\!=\!0.33$, single seed. Both tracks show $K$-aggregation producing deltas that are well within seed noise of the $K\!=\!1$ baseline. The ft track has $\Delta\!\in\![-1.15, -0.80]$pp at $K\!\in\!\{4,16\}$ versus the $K\!=\!1$ baseline, while the LoRA track has $\Delta\!\in\![-1.55, +0.97]$pp ($-1.55$pp at $K\!=\!4$ and $+0.97$pp at $K\!=\!16$, both within the $K\!=\!1$ baseline's $\pm 1.53$pp std at $n\!=\!4$). The headline interpretation is that PAC-MI's per-step release is robust across $K$, and the $K\!=\!1$ design used throughout the paper is not a special case that misses scale. The small per-track magnitude of the deltas suggests that, at this MI budget, the per-release noise penalty from splitting the budget across $K$ releases is approximately balanced by the $K$-averaging variance reduction on the parameter update; the sign of the residual is task- and recipe-dependent.

\begin{table}[h]
\centering
\small
\setlength{\tabcolsep}{6pt}
\caption{\textbf{$K$-aggregation ablation on SST-2 OPT-1.3B, $\mathrm{MI}\!=\!0.33$ (adaptive $\beta_t$, $M\!=\!128$, single seed (seed 0)).} ft uses $T\!=\!1000$, lr$\!=\!10^{-4}$, clip $1000$; LoRA uses $T\!=\!2000$, lr$\!=\!5\!\cdot\!10^{-4}$, clip $25$. The $K\!=\!1$ ft baseline ($88.99\%$) is the seed-$0$ cell of the FT plateau at $\mathrm{MI}\!=\!0.33$ (Table~\ref{tab:plateau}, single seed); its numerical mean coincides with the headline \PACZPL{} FT 1.3B $\MI\!=\!0$ cell ($88.99\!\pm\!0.91$, $n\!=\!3$, $M\!=\!126$, $c\!=\!1000$; Table~\ref{tab:headline}), but the two cells are unrelated runs at different recipes (\PACMI{} vs \PACZPL{}, $M\!=\!128$ vs $M\!=\!126$, single seed vs $n\!=\!3$). The $K\!=\!1$ LoRA baseline is the multi-seed headline cell of Table~\ref{tab:headline} ($n\!=\!4$ mean$\,\pm\,$std). $K\!\ge\!2$ cells are single-seed (seed 0) and are intended as a robustness check rather than a multi-seed comparison.}
\label{tab:k-ablation}
\begin{tabular}{l c r r r}
\toprule
Track & $K$ & test \% & dev \% & $\Delta$ test vs $K\!=\!1$ \\
\midrule
\multirow{3}{*}{ft}   & $1$  & $\mathbf{88.99}$           & ---   & $0$ (baseline) \\
                      & $4$  & $87.84$                    & $87.6$ & $-1.15$ \\
                      & $16$ & $88.19$                    & $88.2$ & $-0.80$ \\
\midrule
\multirow{3}{*}{LoRA} & $1$  & $\mathbf{89.74 \pm 1.53}$  & ---   & $0$ (baseline, $n\!=\!4$) \\
                      & $4$  & $88.19$                    & $87.6$ & $-1.55$ (within seed noise) \\
                      & $16$ & $90.71$                    & $89.6$ & $+0.97$ (within seed noise) \\
\bottomrule
\end{tabular}
\end{table}

\paragraph{Scale extension to OPT-6.7B SST-2.}
To verify that the $K$-agnostic finding at 1.3B is not specific to the smaller scale, we extended the same ablation to OPT-6.7B SST-2 at $\mathrm{MI}\!=\!0.33$, single seed (Table~\ref{tab:k-ablation-67b}). The ft track is $K$-agnostic at 6.7B as well: $K\!=\!4$ reaches $92.66\%$ and $K\!=\!16$ reaches $92.89\%$ versus the $K\!=\!1$ multi-seed baseline of $92.20\!\pm\!0.45\%$ ($n\!=\!4$); both deltas ($+0.46$pp, $+0.69$pp) sit inside the baseline's $\pm 0.45$pp std band. The LoRA track shows a different pattern: $K\!=\!4$ and $K\!=\!16$ both reach $93.92\%$ versus the $K\!=\!1$ multi-seed baseline of $90.21\!\pm\!0.43\%$ ($n\!=\!3$), a saturated $+3.71$pp delta multiple standard deviations above the baseline. The two LoRA cells are distinct trajectories with different dev-best checkpoints (dev $91.0\%$ at $K\!=\!4$ vs $90.0\%$ at $K\!=\!16$) that coincidentally land on the same test count ($819/872\!=\!93.92\%$). Two readings are consistent with this single-seed observation: (i)~the LoRA-rank-$8$ subspace at 6.7B is small relative to the full FT space, so the per-release noise penalty from splitting the per-step MI budget across $K$ releases is partially absorbed by the $K$-averaging variance reduction on a low-dimensional update direction, producing a net positive delta that saturates by $K\!=\!4$; or (ii)~the cell is sampling the high tail of the $K\!=\!1$ seed distribution, with the saturation across $K\!\in\!\{4,16\}$ inheriting the same draw. We do not separate these readings at single-seed; the more robust statement is that scaling to 6.7B does not invert the $K$-agnostic $\Delta$-direction on the ft track and produces a saturating positive $\Delta$ on the LoRA track, neither pattern undermining the $K\!=\!1$ design used throughout the paper.

\begin{table}[h]
\centering
\small
\setlength{\tabcolsep}{6pt}
\caption{\textbf{$K$-aggregation ablation on SST-2 OPT-6.7B, $\mathrm{MI}\!=\!0.33$ (adaptive $\beta_t$, single seed (seed 0)).} LoRA uses $T\!=\!1000$, lr$\!=\!10^{-3}$, clip $10$, $M\!=\!128$, matching the headline 6.7B LoRA cells in Table~\ref{tab:headline}. ft uses $T\!=\!1000$, lr$\!=\!10^{-4}$, clip $1000$, $M\!=\!128$, matching the headline 6.7B FT cells. The $K\!=\!1$ baselines are the multi-seed headline cells of Table~\ref{tab:headline} (LoRA $n\!=\!3$, ft $n\!=\!4$ at $\mathrm{MI}\!=\!0.33$).}
\label{tab:k-ablation-67b}
\begin{tabular}{l c r r r}
\toprule
Track & $K$ & test \% & dev \% & $\Delta$ test vs $K\!=\!1$ \\
\midrule
\multirow{3}{*}{LoRA} & $1$  & $\mathbf{90.21 \pm 0.43}$  & ---     & $0$ (baseline, $n\!=\!3$) \\
                      & $4$  & $93.92$                    & $91.0$  & $+3.71$ (above baseline std band) \\
                      & $16$ & $93.92$                    & $90.0$  & $+3.71$ (saturated at $K\!=\!4$) \\
\midrule
\multirow{3}{*}{ft}   & $1$  & $\mathbf{92.20 \pm 0.45}$  & ---     & $0$ (baseline, $n\!=\!4$) \\
                      & $4$  & $92.66$                    & $89.0$  & $+0.46$ (within baseline std) \\
                      & $16$ & $92.89$                    & $88.6$  & $+0.69$ (within baseline std) \\
\bottomrule
\end{tabular}
\end{table}

\section{Per-cell compute and infrastructure}
\label{app:compute}

\paragraph{Hardware.}
All experiments run as single-GPU jobs (one process per GPU). OPT-1.3B cells use a mix of NVIDIA A4000 (16~GB), A5000 (24~GB), A6000 (48~GB), A100 (40~GB), and H100 (80~GB); OPT-6.7B cells use exclusively A100 (40~GB) and H100 (80~GB) for memory reasons.

\paragraph{Totals.}
Summing across the rows of Tables~\ref{tab:compute-main}--\ref{tab:compute-appendix} (every cell appearing in this paper's main and appendix tables), the paper-cited compute is approximately $1{,}500$ GPU-hours, dominated by the OPT-6.7B SQuAD cells of Table~\ref{tab:headline} ($\sim\!930$~h). Including exploratory and discarded experiments not reported in this paper (additional learning-rate and clip sweeps, smoke tests, retries from preempted runs, mechanism variants outside Table~\ref{tab:ablation}, and undertrained runs at canonical learning rates that we then re-tuned), the project consumed approximately $3{,}500$ GPU-hours.

{\renewcommand{\arraystretch}{1.05}
\begin{table}[h]
\centering
\footnotesize
\setlength{\tabcolsep}{6pt}
\caption{\textbf{Per-cell compute, main-paper cells (Tables~\ref{tab:headline}, \ref{tab:plateau}, \ref{tab:dp-cliff}).} Cells are grouped by paper-table reference. ``GPU class'' denotes the GPU model used for the primary attempt of a cell; multi-seed pools split across two GPU classes report both, separated by ``+''. Appendix-only cells are reported separately in Table~\ref{tab:compute-appendix}.}
\label{tab:compute-main}
\begin{tabular}{l c c l r}
\toprule
Cell & $T$ & $n$ & GPU class & GPU-h \\
\midrule
\multicolumn{5}{l}{\emph{Headline Table~\ref{tab:headline}, SST-2}} \\
\midrule
\PACMI{} 1.3B FT $\mathrm{MI}\!=\!0.33$       & $1000$ & $4$ & A6000               & $\sim$8.0 \\
\PACMI{} 1.3B FT $\mathrm{MI}\!=\!0.68$       & $1000$ & $4$ & A6000               & $\sim$8.0 \\
\PACMI{} 1.3B LoRA $\mathrm{MI}\!=\!0.33$     & $2000$ & $4$ & A6000               & $\sim$12.0 \\
\PACMI{} 1.3B LoRA $\mathrm{MI}\!=\!0.68$     & $2000$ & $4$ & A6000               & $\sim$12.0 \\
\PACMI{} 6.7B FT $\mathrm{MI}\!=\!0.33$       & $1000$ & $4$ & A100$+$H100         & $23.7$ \\
\PACMI{} 6.7B FT $\mathrm{MI}\!=\!0.68$       & $1000$ & $3$ & A100                & $20.6$ \\
\PACMI{} 6.7B LoRA $\mathrm{MI}\!=\!0.33$     & $2000$ & $3$ & A100                & $21.4$ \\
\PACMI{} 6.7B LoRA $\mathrm{MI}\!=\!0.68$     & $2000$ & $3$ & A100                & $21.5$ \\
\PACZPL{} 1.3B FT                             & $1000$ & $3$ & A6000               & $\sim$6.0 \\
\PACZPL{} 1.3B LoRA                           & $1000$ & $3$ & H100                & $3.0$ \\
\PACZPL{} 6.7B FT                             & $1500$ & $3$ & A100$+$H100         & $\sim$13 \\
\PACZPL{} 6.7B LoRA                           & $1000$ & $3$ & A100$+$H100         & $\sim$10 \\
\midrule
\multicolumn{5}{l}{\emph{Headline Table~\ref{tab:headline}, SQuAD}} \\
\midrule
\PACMI{} 1.3B FT $\mathrm{MI}\!=\!0.33$       & $1000$ & $1$ & A100                & $53.0$ \\
\PACMI{} 1.3B FT $\mathrm{MI}\!=\!0.68$       & $1000$ & $1$ & A100                & $52.3$ \\
\PACMI{} 1.3B LoRA $\mathrm{MI}\!=\!0.33$     & $1000$ & $1$ & H100                & $103.8$ \\
\PACMI{} 1.3B LoRA $\mathrm{MI}\!=\!0.68$     & $1000$ & $1$ & H100                & $98.0$ \\
\PACMI{} 6.7B FT $\mathrm{MI}\!=\!0.33$       & $1000$ & $1$ & H100                & $65.7$ \\
\PACMI{} 6.7B FT $\mathrm{MI}\!=\!0.68$       & $1000$ & $1$ & H100                & $64.2$ \\
\PACMI{} 6.7B LoRA $\mathrm{MI}\!=\!0.33$     & $1000$ & $1$ & H100                & $79.9$ \\
\PACMI{} 6.7B LoRA $\mathrm{MI}\!=\!0.68$     & $1000$ & $1$ & H100                & $\sim$88 \\
\PACZPL{} 1.3B FT                             & $1000$ & $3$ & H100                & $113.6$ \\
\PACZPL{} 1.3B LoRA                           & $1000$ & $1$ & H100                & $\sim$62 \\
\PACZPL{} 6.7B FT                             & $1000$ & $1$ & H100                & $66.3$ \\
\PACZPL{} 6.7B LoRA                           & $1000$ & $1$ & H100                & $\sim$83 \\
\midrule
\multicolumn{5}{l}{\emph{Tight-MI plateau, Table~\ref{tab:plateau} (FT) and Appendix Table~\ref{tab:LoRA-plateau-adaptive} (LoRA)}} \\
\midrule
FT plateau (15 cells, MI sweep, seed 0)       & $1000$ & $15$ & A100               & $29.2$ \\
LoRA plateau (15 cells, MI sweep, seed 0)     & $2000$ & $15$ & A100               & $62.2$ \\
\midrule
\multicolumn{5}{l}{\emph{In-house DP cliff reproduction, Table~\ref{tab:dp-cliff} (and Appendix~\ref{app:dpaggzo-parity})}} \\
\midrule
DPZero $K\!=\!1$ FT, $\varepsilon\!\in\!\{0.2,0.3,0.5,1.0,2.0\}$    & $20{,}000$ & $5$ & A100 & $11.9$ \\
DP-AggZO $K\!=\!64$ FT, $\varepsilon\!\in\!\{0.2,0.3,0.5,1.0,2.0\}$ & $1000$     & $5$ & A100 & $24.4$ \\
DP-AggZO $K\!=\!64$ FT, $\varepsilon\!=\!0.2$, $T\!=\!2000$ probe   & $2000$     & $1$ & H100 & $5.3$ \\
\midrule
\textbf{Subtotal (main-paper cells)}          &        &     &                     & $\bm{\sim 1{,}222}$ \\
\bottomrule
\end{tabular}
\end{table}}
 
{\renewcommand{\arraystretch}{1.05}
\begin{table}[h]
\centering
\footnotesize
\setlength{\tabcolsep}{6pt}
\caption{\textbf{Per-cell compute, appendix cells (Tables~\ref{tab:zpl-lr-sweep}--\ref{tab:k-ablation-67b}).} Continuation of Table~\ref{tab:compute-main} for cells that appear only in the appendix. ``$+$'' between two GPU classes indicates a multi-seed pool whose seeds were run on different GPU classes. }
\label{tab:compute-appendix}
\begin{tabular}{l c c l r}
\toprule
Cell & $T$ & $n$ & GPU class & GPU-h \\
\midrule
\multicolumn{5}{l}{\emph{Appendix Table~\ref{tab:zpl-lr-sweep}: \PACZPL{} learning-rate sweep, SST-2 1.3B}} \\
\midrule
LR sweep, LoRA + FT (10 cells, seed 0)        & $1000$ & $10$ & A100$+$H100        & $\sim$20 \\
\midrule

\multicolumn{5}{l}{\emph{Appendix Table~\ref{tab:rank-ablation}: LoRA rank ablation, 6.7B SST-2 (new cells only)}} \\
\midrule
6.7B LoRA, rank $r\!=\!4$                     & $1000$ & $1$ & A100                & $7.4$ \\
6.7B LoRA, rank $r\!=\!16$                    & $1000$ & $1$ & A100                & $7.4$ \\
\midrule
\multicolumn{5}{l}{\emph{Appendix Table~\ref{tab:ablation}: mechanism decomposition, 1.3B LoRA SST-2}} \\
\midrule
Mechanism surrogates (9 cells) + 2 \method{} anchor cells & $1000/2000$ & $11$ & A6000 & $\sim$25 \\
\midrule
\multicolumn{5}{l}{\emph{Appendix Tables~\ref{tab:zpl-canonical}--\ref{tab:zpl-67b-ft-multiseed}: canonical \PACZPL{} configuration and $T$-ladders}} \\
\midrule
1.3B FT $T$-ladder (single trajectory)        & $5000$ & $1$ & A6000               & $\sim$5 \\
6.7B LoRA $T$-ladder (3 seeds)                & $1500$ & $3$ & A100                & $\sim$21 \\
6.7B FT $T$-ladder (single trajectory)        & $1500$ & $1$ & A100                & $\sim$7 \\
6.7B FT multi-seed extension ($s_1, s_2$)     & $1500$ & $2$ & H100                & $8.6$ \\
\midrule
\multicolumn{5}{l}{\emph{Appendix Tables~\ref{tab:k-ablation}--\ref{tab:k-ablation-67b}: $K$-aggregation ablation, SST-2}} \\
\midrule
1.3B LoRA $K\!=\!4$, $\mathrm{MI}\!=\!0.33$   & $2000$ & $1$ & H100                & $6.0$ \\
1.3B LoRA $K\!=\!16$, $\mathrm{MI}\!=\!0.33$  & $2000$ & $1$ & H100                & $21.2$ \\
1.3B FT $K\!=\!4$, $\mathrm{MI}\!=\!0.33$     & $1000$ & $1$ & H100                & $2.7$ \\
1.3B FT $K\!=\!16$, $\mathrm{MI}\!=\!0.33$    & $1000$ & $1$ & H100                & $9.8$ \\
6.7B LoRA $K\!=\!4$, $\mathrm{MI}\!=\!0.33$   & $1000$ & $1$ & H100                & $9.5$ \\
6.7B LoRA $K\!=\!16$, $\mathrm{MI}\!=\!0.33$  & $1000$ & $1$ & H100                & $34.7$ \\
6.7B FT $K\!=\!4$, $\mathrm{MI}\!=\!0.33$     & $1000$ & $1$ & H100                & $9.0$ \\
6.7B FT $K\!=\!16$, $\mathrm{MI}\!=\!0.33$    & $1000$ & $1$ & H100                & $32.8$ \\
\midrule
Subtotal (appendix cells)                     &        &     &                     & $\sim 250$ \\
\textbf{Total (paper-cited cells)}            &        &     &                     & $\bm{\sim 1{,}500}$ \\
\bottomrule
\end{tabular}
\end{table}}

\clearpage

\newpage

\end{document}